\DeclareMathOperator*{\argmax}{arg\,max}
\newtheorem{theorem}{Theorem}
\newtheorem{lemma}[theorem]{Lemma}
\title{Exponential Family Graph Embeddings}
\author{Abdulkadir \c{C}elikkanat\\
CentraleSup\'{e}lec and Inria Saclay\\
University of Paris-Saclay\\
Gif-Sur-Yvette, France\\
abdulkadir.celikkanat@centralesupelec.fr
\And
Fragkiskos D. Malliaros\\
CentraleSup\'{e}lec and Inria Saclay\\
University of Paris-Saclay\\
Gif-Sur-Yvette, France\\
fragkiskos.malliaros@centralesupelec.fr
}
\begin{document}

\maketitle

\begin{abstract}
Representing networks in a low dimensional latent space is a crucial task with many interesting applications in graph learning problems, such as link prediction and node classification. A widely applied network representation learning paradigm is based on the combination of random walks for sampling context nodes and the traditional \textit{Skip-Gram} model to capture center-context node relationships. In this paper, we emphasize on exponential family distributions to capture rich interaction patterns between nodes in random walk sequences. We introduce the generic  \textit{exponential family graph embedding} model, that generalizes random walk-based network representation learning techniques to exponential family conditional distributions. We study three particular instances of this model, analyzing their properties and showing their relationship to existing unsupervised learning models. Our experimental evaluation on real-world datasets demonstrates that the proposed techniques outperform well-known baseline methods in two downstream machine learning tasks.
\end{abstract}

\section{Introduction}\label{sec:introduction}
Graphs or networks have become ubiquitous as data from diverse disciplines can naturally be represented as graph
structures. Characteristics examples include social, collaboration, information and biological networks, or even
networks that are generated by textual information. Besides, graphs are not only useful as models for data representation but can be proven valuable in prediction and learning tasks. For example, one might wish to recommend
new friendship relationships in social networks such as Facebook and LinkedIn, predict the missing structure or
the role of a protein in a protein-protein interaction graph, or even to discover missing relations between entities in
a knowledge graph. To that end, the tasks of learning and analyzing large-scale real-world graph data drive
several important applications, but also pose a plethora of  challenges.

The major challenge in machine learning on graphs is how to incorporate information about its structure in the learning model. For example, in the case of friendship recommendations in social networks (also known as the \textit{link prediction} problem), in order to determine whether two unlinked users are similar, we need to obtain an informative representation of the users and their proximity --- that potentially is not fully captured by graph statistics (e.g., centrality criteria) \cite{chakr}, kernel functions \cite{vishwanathan2010graph}, or more generally other handcrafted features extracted from the graph \cite{Liben-Nowell:2007}. To deal with these challenges, a recent paradigm in network analysis, known as \textit{network representation learning} (NRL), aims at finding vector representations of nodes (i.e., \textit{node embeddings}), in such a way that the structure of the network and its various properties are preserved in lower-dimensional representation space.  The problem is typically expressed as an optimization task, where the goal is to optimize the mapping function from the graph space to a low-dimensional space, so that proximity relationships in the learned space reflect the structure of the original graph \cite{survey_hamilton_rex_leskovec,survey_goyal_ferrara,survey_cai_zheng_chen}. Furthermore, in most cases, the feature learning approach is purely \textit{unsupervised}. That way, after obtaining embeddings, the learned features can further be used in any downstream machine learning task, such as classification and prediction.

Initial studies in network representation learning  mostly focused on classical dimensionality reduction techniques via matrix factorization (e.g., \cite{grarep,relational_learning_social_dim}). Although the success of such approaches in capturing the structural properties of a network, they tend to be, unfortunately, not efficient for large scale networks.  Therefore, the community has concentrated on developing alternative approaches, and inspired by the field of natural language processing (NLP) \cite{word2vec}, \textit{random-walk based} methods have become a prominent line of research for network representation learning. Characteristic examples of such approaches constitute \textsc{DeepWalk} \cite{deepwalk} and \textsc{Node2Vec} \cite{node2vec} models. Typically, those methods sample a set of random walks from the input graph, treating them as the equivalent of sentences in natural language, while the nodes visited by the walk are considered as the equivalent of words. The point that essentially differentiates these methods, concerns the strategy that is followed to generate (i.e., sample) node sequences. The idea mainly aims to model \textit{center-context} node relationships, examining the occurrence of a node within a certain distance with respect to another node (as indicated by the random walk);  this information is then utilized to represent the relationship between a pair of nodes. Then, widely used NLP models, such as the \textit{Skip-Gram} model \cite{word2vec}, are used to learn node latent representations, examining \textit{simple co-occurrence relationships} of nodes within the set of random walk sequences.

Nevertheless, \textit{Skip-Gram} models the conditional distribution of nodes within a random walk based on the \textit{softmax} function, which might prohibit to capture richer  types of interaction patterns among nodes that co-occur within a random walk. Motivated by the aforementioned limitation of current random walk-based NRL methodologies, we argue that considering more expressive \textit{conditional probability models} to relate nodes within a random walk sequence, might lead to more informative latent node representations.

In particular, we capitalize on \textit{exponential family distribution} models to capture interactions between nodes in random walks. Exponential families correspond to a mathematically convenient parametric set of probability distributions, which is flexible in representing relationships among entities. More precisely, we introduce the concept of \textit{exponential family graph embeddings} (EFGE), that generalizes random walk NRL techniques to exponential family conditional distributions. We study three particular instances of the proposed \textsc{EFGE} model  that correspond to widely known exponential family distributions, namely the Bernoulli, Poisson and Normal distributions. The extensive experimental evaluation of the proposed models in the tasks of node classification and link prediction suggests that, the proposed \textsc{EFGE} models can further improve the predictive capabilities of node embeddings, compared to  traditional \textit{Skip-Gram}-based and other baseline methods. In addition, we further study  the objective function of the proposed parametric models,  providing connections to well-known unsupervised graph learning  models under appropriate parameter settings.

\vspace{.15cm}
\noindent \textbf{Contributions.}  The main contributions of the paper can be summarized as follows:
\begin{itemize}
    \item We introduce a novel representation learning model, called \textsc{EFGE}, which generalizes classical \textit{Skip-Gram}-based approaches to exponential family distributions, towards more expressive NRL methods that rely on random walks. We study three instances of the model, namely the \textsc{EFGE-Bern}, \textsc{EFGE-Pois} and \textsc{EFGE-Norm} models, that correspond to well-known distributions. 
    
    
    \item We show that the objective functions of  existing unsupervised and representation learning models, including word embedding in NLP \cite{word2vec} and overlapping community detection \cite{bigclam}, can be re-interpreted under the \textsc{EFGE} model.
    
    \item In a thorough experimental evaluation, we demonstrate that the proposed exponential family graph embedding models generally outperform  widely used baseline approaches in various learning tasks on graphs.  In addition, the running time to learn the representations  is similar to other \textit{Skip-Gram}-based models.
    
\end{itemize}


\vspace{.1cm}
\noindent \textbf{Source code.} The implementation of the proposed models is provided in the following website: \url{https://abdcelikkanat.github.io/projects/EFGE/}.

\section{Preliminary Concepts}\label{sec:preliminary} 

\subsection{Random Walk-based Node Embeddings}

Let $G=(\mathcal{V}, \mathcal{E})$ be a graph, where $\mathcal{V}$ and $\mathcal{E} \subseteq \mathcal{V}\times\mathcal{V}$ denote the vertex and  edge sets respectively. Random-walk based node embedding methods \cite{deepwalk,node2vec,our_tne,our_kernel_embedding} generate a set of node sequences by simulating random walks that can follow various strategies; node representations are then learned relying on these generated sequences. We use an ordered sequence of nodes $\textbf{w}=(w_1,...,w_L) \in \mathcal{W}$ to denote a \textit{walk}, if every pair $(w_{l-1}, w_{l})$ belongs to the edge set $\mathcal{E}$ for all $1 < l < L$. Then, the notation $\mathcal{W}$ will represent the set of walks of length $L$.

Being inspired from the the field of natural language processing and the \textit{Skip-Gram} model \cite{word2vec} for word embeddings, each walk is considered as a sentence in a document and similarly the surrounding vertices appearing at a certain distance from each node in a walk are defined as the \textit{context set} of this particular node, which is also called \textit{center} in our description. More formally, we will use $\mathcal{N}^{\textbf{w}}_{\gamma}(w_l) := \{w_{l+j}\in\mathcal{V}: -\gamma \leq j \leq \gamma, \ j \not= 0\}$ to denote the context sequence of  node $w_l$ in the random walk $\textbf{w}\in\mathcal{W}$.  Representations of nodes are  learned by optimizing the relationship between these center and context node pairs under a certain model. More formally, the objective function of  \textit{Skip-Gram} based models for network representation learning is defined in the following way:

\begin{align}\label{eq:skipgram_objective}
    \argmax_{\Omega} \frac{1}{N\!\cdot\! L} \sum_{\textbf{w} \in \mathcal{W}} \sum_{1\leq l \leq L}\sum_{v \in \mathcal{N}_{\gamma}(w_l)}\log p(y_{w_l,v}; \Omega),
\end{align}

\noindent where $y_{w_l,v_j}$ represents the relationship between the center $w_l$ and  context node $v$ in the walk $\textbf{w}\in \mathcal{W}$, $N$ is the number of walks, $L$ is length of walks and $\Omega=(\mathbf{\alpha}, \mathbf{\beta})$. Note that, we typically learn two embedding vectors $\mathbf{\alpha}[v]$ and $\mathbf{\beta}[v]$ for each node $v \in \mathcal{V}$, where $\mathbf{\beta}[v]$ corresponds to the vector if the node is interpreted as a center node  and $\mathbf{\alpha}[v]$ denotes the vector if  $v$ is considered as the context of other nodes. In all downstream machine learning applications, we only consider $\mathbf{\alpha}[v]$ to represent the embedding vector of $v$.

Generally speaking, random walk-based network representation learning methods can use different approaches to sample the context of a particular node. For instance, \textsc{DeepWalk} performs uniform truncated random walks, while \textsc{Node2Vec} is based on second order random walks to capture context information. Another crucial point related to  \textit{Skip-Gram} based models, has to do with the way that the relationship among center and context nodes in Eq. \eqref{eq:skipgram_objective} is modeled. In particular, \textsc{DeepWalk} uses the \textit{softmax} function to model the conditional distribution $p(\cdot)$ of a context node for a given center, in such a way that  nodes occurring in similar contexts tend to get close to each other in the latent representation space. In a similar way, \textsc{Node2Vec} adopts the negative sampling strategy, where it implicitly models co-occurrence relationships of context and center node pairs. As we will present shortly, in our work we rely on exponential family distributions, in order to  further extend and generalize random-walk NRL models to conditional probability distribution beyond the \textit{softmax} function --- towards  capturing  richer types of node interaction patterns.

\subsection{Exponential Families}
In this paragraph, we introduce the \textit{exponential family  distributions}, a parametric set of probability distributions that includes among others the Gaussian, Binomial and  Poisson distributions. A class of probability distributions is called exponential family distributions if they can be expressed as

\begin{align}\label{eq:exponential_form}
p(y) = h(y)\exp{\Big(\eta T(y) - A(\eta)\Big)},   
\end{align}

\noindent where $h$ is the \textit{base measure}, $\eta$ are the \textit{natural parameters}, $T$ is the \textit{sufficient statistic} of the distribution and $A(\eta)$ is  the \textit{log-normalizer} or \textit{log-partition} function \cite{exp-family}.  Different choices of base measure and sufficient statistics lead us to obtain different probability distributions. For instance, the base measure and sufficient statistic of the \textit{Bernoulli} distribution are $h(y)=1$ and $T(y)=y$ respectively, while for the \textit{Poisson} distribution  we have $h(y)=1/y!$ and $T(y)=y$.

As we mentioned above, exponential families contain a wide range of commonly used distributions,  providing a general class of models by re-parameterizing distributions in terms of the natural parameters $\eta$.  That way, we will use the natural parameter $\eta$ to design a set of network representation learning models, defining $\eta_{v,u}$ as the product of context and center vectors in the following way:
 \begin{align*}
     \eta_{v,u} := f\big(\mathbf{\beta}[v]^\top\!\cdot\!\mathbf{\alpha}[u]\big),
 \end{align*}
where $f$ is called the \textit{link function}. As we will present shortly in the following section, we have many alternative options for the form of the link function $f(\cdot)$.

\section{Proposed Approach}\label{sec:method}
In this section, we will introduce the proposed \textit{exponential family graph embedding} models, referred to as \textsc{EFGE}. The main idea behind this family of models is to utilize the expressive power of exponential family distribution towards conditioning context nodes with respect to the center node of interest. Initially, we will describe the formulation of the general objective function of the \textsc{EFGE} model, and then we will present particular instances of the model based on different exponential family distributions.

Let $\mathcal{W}$ be a collection of node sequences generated by following a random walk strategy over a given graph $G$, as defined in the previous section. Based on that, we can define a generic objective function to learn node embeddings in the following way: 

\begin{align}\label{eq:main_objective}
    \mathcal{L}(\mathbf{\alpha}, \mathbf{\beta)} :=  \argmax_{\Omega}\sum_{\textbf{w} \in \mathcal{W}} \sum_{1\leq l \leq L}\sum_{v \in \mathcal{V}}\log p(y_{w_l,v}^l; \Omega),
\end{align}

\noindent where $y_{w_l,v}^l$ is the observed value indicating the relationship between the center $w_l$ and  context node $v$. Here, we aim to find embedding vectors $\Omega=(\mathbf{\alpha}, \mathbf{\beta})$ by maximizing the log-likelihood function in Eq. \eqref{eq:main_objective}. Note that, the objective function in Eq. \eqref{eq:main_objective} is quite similar to the one of the \textit{Skip-gram} model \cite{word2vec} presented in Eq. \eqref{eq:skipgram_objective},  except that we also include nodes that are not belonging to context sets.

Instead of restricting ourselves to the \textit{Sigmoid} or \textit{Softmax} functions in order to model the probability in the objective function of Eq. \eqref{eq:main_objective}, we provide a generalization assuming that each $y_{w_l,v}$ follows an exponential family distribution. That way, the objective function used to learn node embedding vector sets $\Omega=(\mathbf{\alpha}, \mathbf{\beta})$ can be rewritten as follows:

\begin{align}\label{eq:main_objective_exp}
    \argmax_{\Omega}\sum_{\textbf{w} \in \mathcal{W}} \sum_{1\leq l \leq L}\sum_{v \in \mathcal{V}}\log h(y_{w_l,v}) &+ \eta_{w_l,v} T(y_{w_l,v})\nonumber \\ &- A(\eta_{w_l,v}).
\end{align}

\noindent As we can observe, Eq. \eqref{eq:main_objective_exp} which is the objective function of the generic \textsc{EFGE} graph embeddings model, generalizes \textit{Skip-Gram}-based models to exponential family conditional distributions described in Eq. \eqref{eq:exponential_form}. That way, \textsc{EFGE} models have the additional flexibility to utilize a wide range of exponential  distributions, allowing them to capture more complex types of node interactions beyond simple co-occurrence relationships. It is also important to stress out that, the first term of Eq. \eqref{eq:main_objective_exp} does not depend on  parameter $\eta_{w_l,v}$; this will bring an advantage during the  optimization process. 

\begin{figure*}[t]
\begin{subfigure}{.20\textwidth}
\centering
\includegraphics[width=\linewidth]{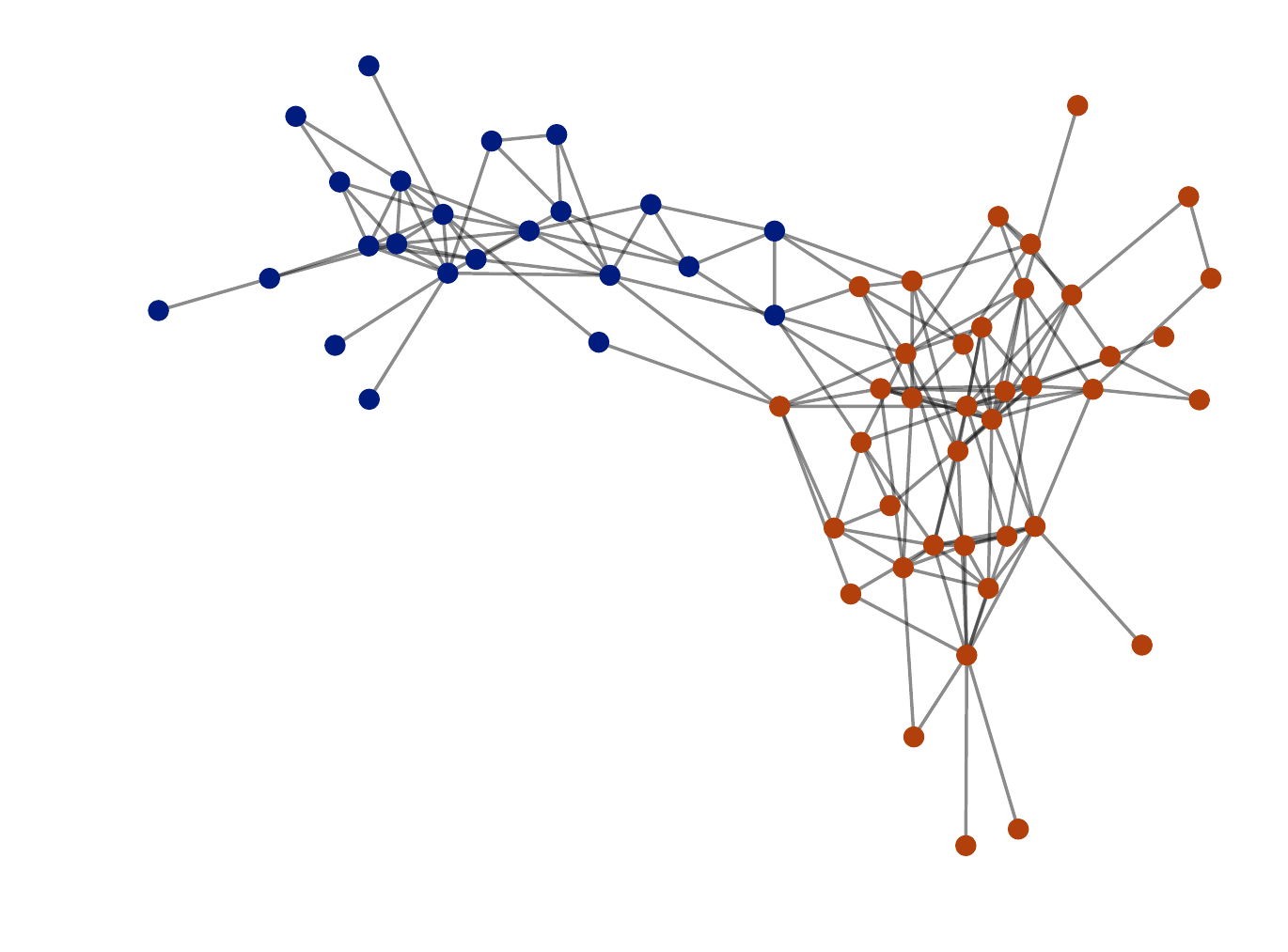}
\caption{Network}
\end{subfigure}%
\begin{subfigure}{.20\textwidth}
\centering
\includegraphics[width=\linewidth]{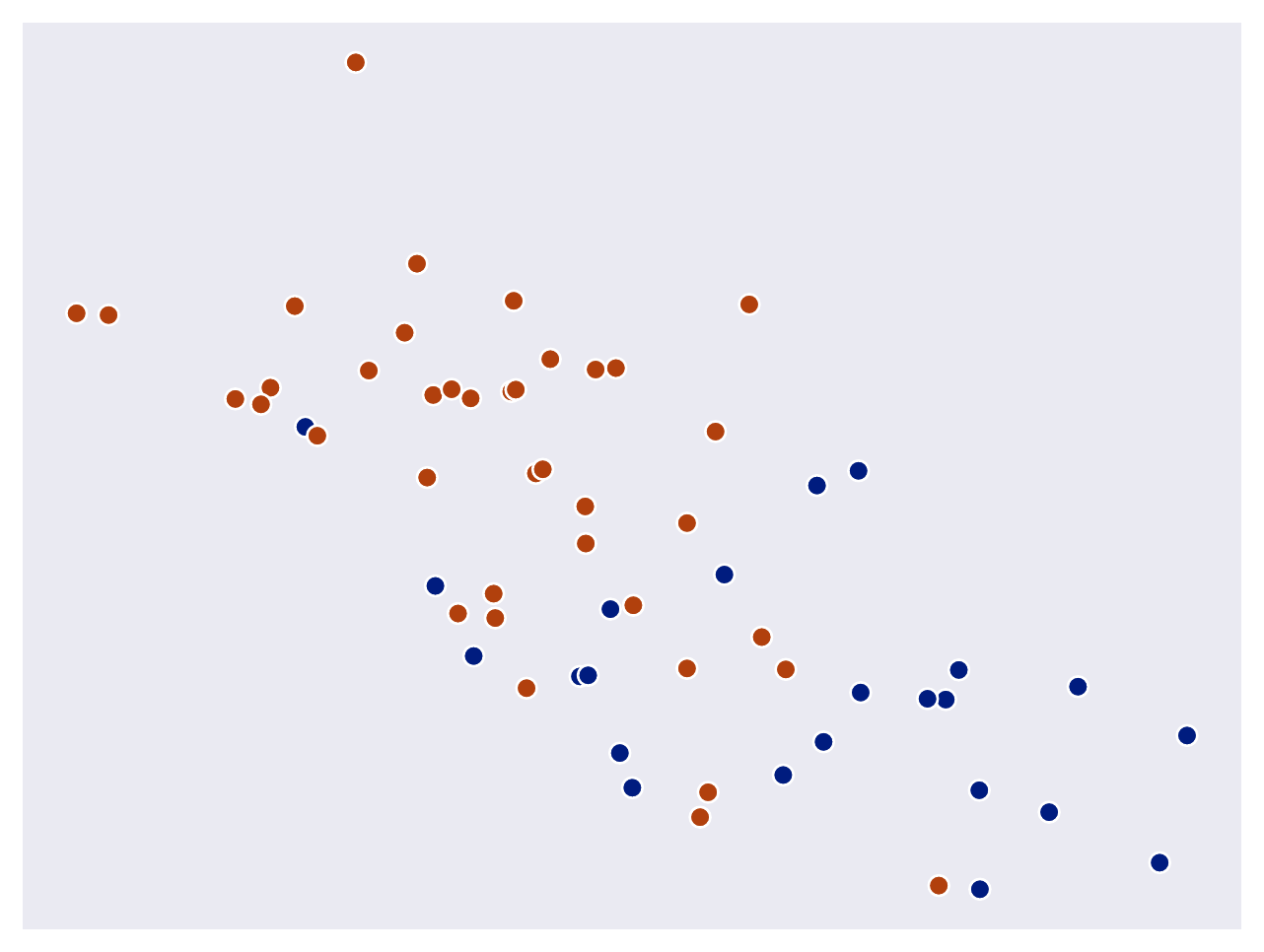}
\caption{\textsc{DeepWalk}}
\label{fig:sub1}
\end{subfigure}%
\begin{subfigure}{.20\textwidth}
\centering
\includegraphics[width=\linewidth]{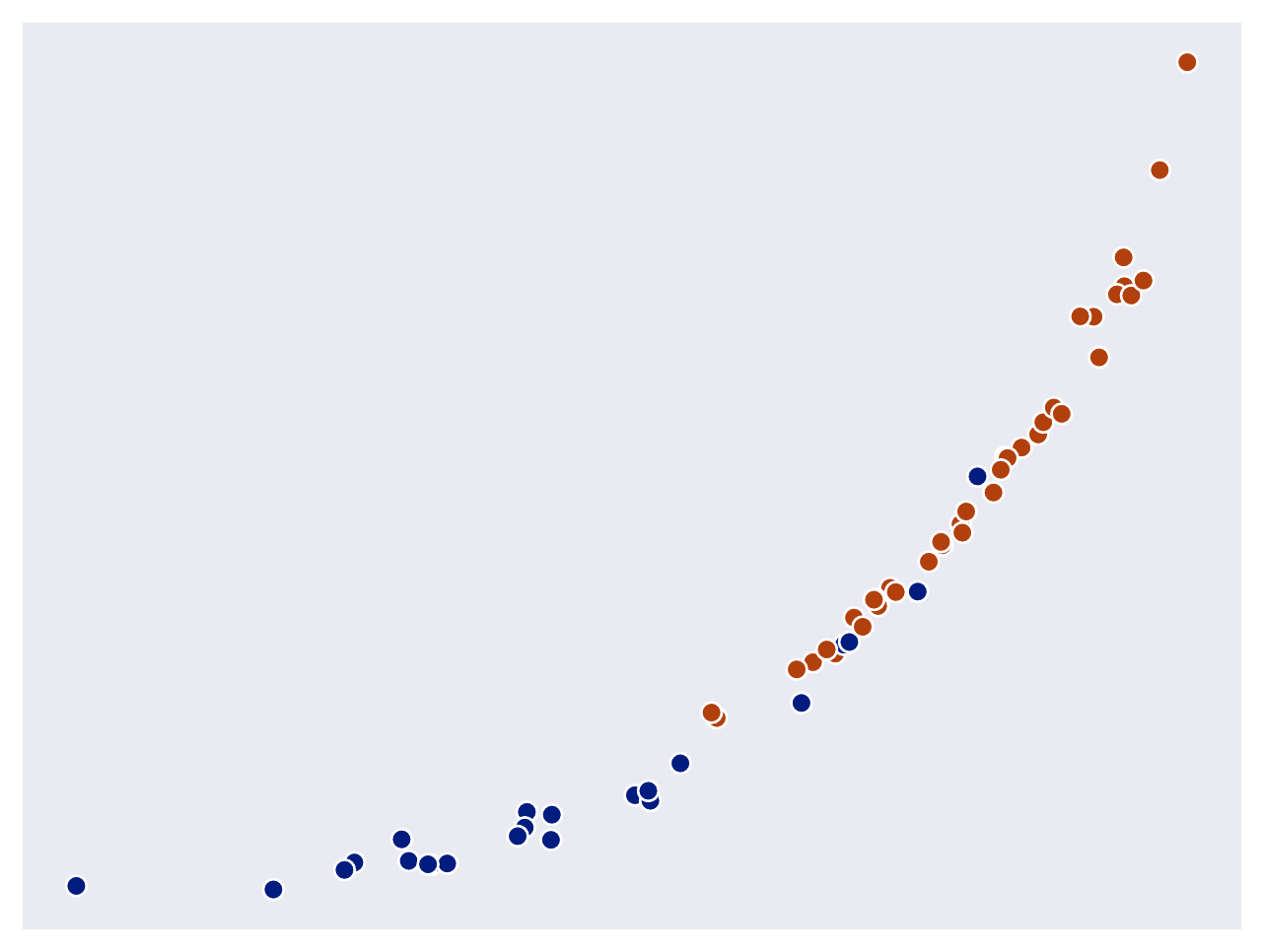}
\caption{\textsc{EFGE-Bern}}
\end{subfigure}%
\begin{subfigure}{.20\textwidth}
\centering
\includegraphics[width=\linewidth]{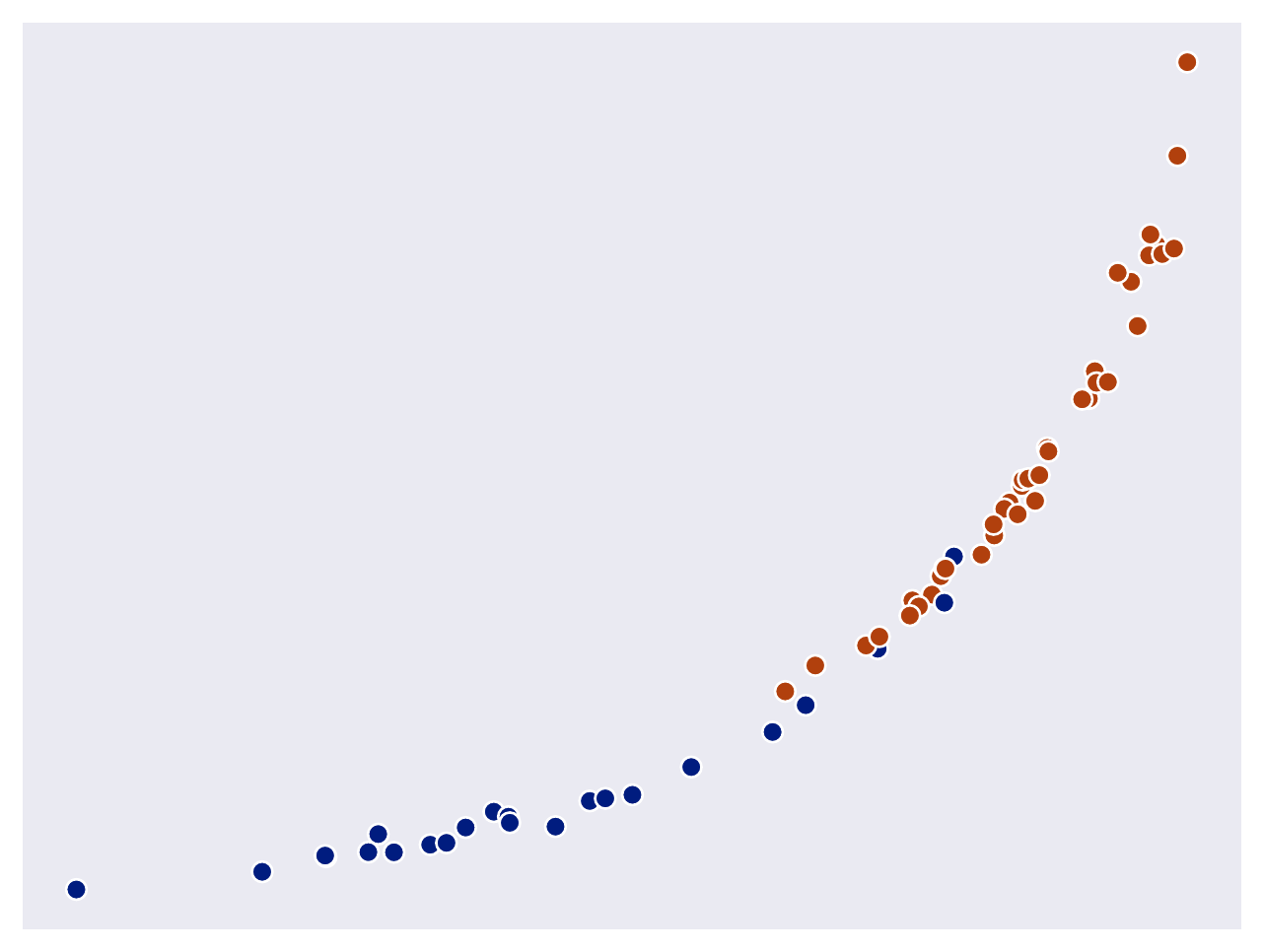}
\caption{\textsc{EFGE-Pois}}
\end{subfigure}%
\begin{subfigure}{.20\textwidth}
\centering
\includegraphics[width=\linewidth]{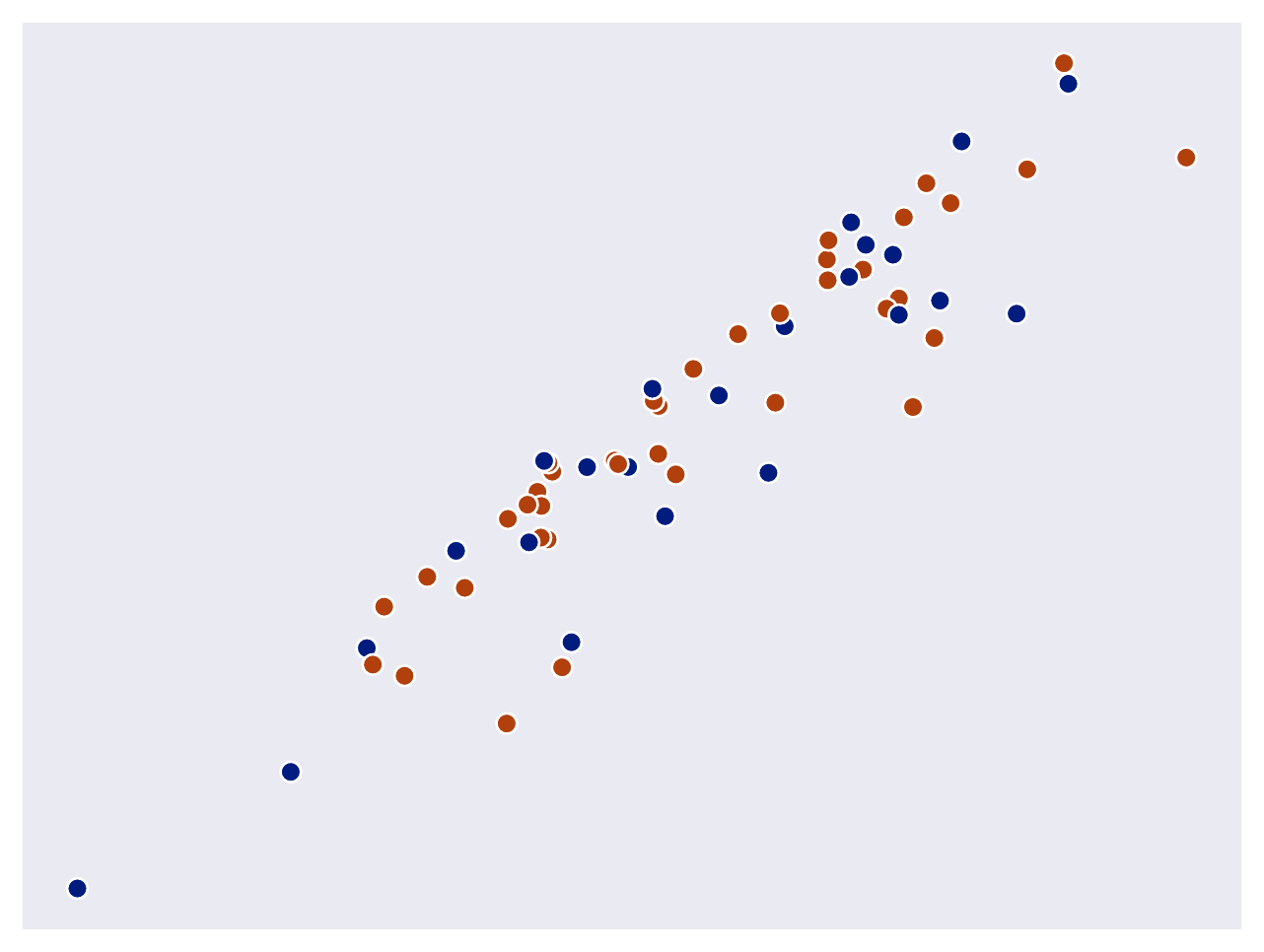}
\caption{\textsc{EFGE-Norm}}
\end{subfigure}
\caption{The \textsl{Dolphins} network composed by $2$ communities and the corresponding embeddings for $d=2$. \label{fig:dolphins}}
\end{figure*}

Initially, we sample a set of $N$ random walks based on a chosen walk strategy. This strategy can be any context sampling process, such as uniform random walks (as in \textsc{DeepWalk}) or biased random walks (as in \textsc{Node2Vec}). Then, based on the chosen instance of the \textsc{EFGE} model, we learn center and context embedding vectors. 
In this paper, we have examined three particular instances of the \textsc{EFGE} model, that represent well known exponential family distributions. In particular, we utilize the Bernoulli, Poisson, and Normal distributions leading to the corresponding \textsc{EFGE-Bern}, \textsc{EFGE-Pois} and \textsc{EFGE-Norm} models. For illustration purposes, Fig. \ref{fig:dolphins} depicts the \textsl{Dolphins} network composed by two communities and the embeddings in two dimensions as computed by different models. As we can observe, for this particular toy example, the proposed \textsc{EFGE-Bern} and \textsc{EFGE-Pois} models learn representations that are able to differentiate nodes with respect to their communities. In the following sections, we analyze the properties of these models in detail.

\subsection{The \textsc{EFGE-Bern} Model}

Our first model is the \textsc{EFGE-Bern} model, in which we assume that each $y_{w_l,v}$ follows a Bernoulli distribution which is equal to $1$ if node $v$ appears in the context set of $w_l$ in the walk $\textbf{w} \in \mathcal{W}$. It can be written as $y_{w_l,v}$ $=$ $x_{w_l,v}^{l-\gamma}$ $\vee$ $\cdots$ $\vee x_{w_l,v}^{l-1} \vee x_{w_l,v}^{l+1}$ $\vee$ $\cdots$ $\vee x_{w_l,v}^{l+\gamma}$, where $x_{w_l,v}^{l+j}$  indicates the appearance of $v$ in the context of $w_l$ at the specific position $j$ $(-\gamma\leq j \leq \gamma)$. We can express the objective function of the \textsc{EFGE-Bern} model, $\mathcal{L}_B(\mathbf{\alpha}, \mathbf{\beta})$, by dividing Eq. \eqref{eq:main_objective_exp} into two parts with respect to the values of $x_{w_l,v}^{l+j}$:

\begin{align*}
    \mathcal{L}_B =& \sum_{\textbf{w} \in \mathcal{W}} \sum_{1\leq l \leq L}\!\left[  \sum_{v \in \mathcal{N}_{\gamma}(w_l)}\!\!\!\!\! \log p(y_{w_l,v}) + \!\!\!\!\!\!\!\sum_{v \not\in \mathcal{N}_{\gamma}(w_l)}\!\!\!\!\!\log p(y_{w_l,v})\right]\\
    =& \sum_{\textbf{w} \in \mathcal{W}} \sum_{1\leq l \leq L}\!\left[  \sum_{\substack{| j | \leq \gamma \\ u^+:=w_j}}\!\!\!\!\!\log p(x_{w_l,u^+}^{l+j}) \! + \!\!\!\!\! \sum_{\substack{| j | \leq \gamma \\ u^-:\not=w_j}}\!\!\!\!\!\log p(x_{w_l,u^-}^{l+j}\!)\right]
\end{align*}

\noindent Note that, the exponential form of a Bernoulli distribution with a parameter $\pi$ is $\exp{( \eta x - A(\eta) )}$, where the log-normalizer $A(\eta)$ is $\log( 1 + \exp(\eta) )$ and the parameter $\pi$ is the sigmoid function $\sigma(\eta) = 1/ \big( 1 - \exp(-\eta) )\big)$. Therefore, we can rewrite the objective function $\mathcal{L}_B(\mathbf{\alpha}, \mathbf{\beta})$ as follows:

\begin{align*}
     \sum_{\textbf{w} \in \mathcal{W}} \sum_{1\leq l \leq L}\left[ \sum_{\substack{| j | \leq \gamma \\ u^+:=w_j}}\!\!\! \log\sigma(\eta_{w_l,u^+}) + \!\!\!\!\!\sum_{\substack{| j | \leq \gamma \\ u^-:\not=w_j}} \!\!\!\!\!\log\sigma(-\eta_{w_l,u^-})\right]
\end{align*}

\noindent We choose the identity map for the link function $f(\cdot)$, so $\eta_{v,u}$ becomes equal to the product of  vectors $\mathbf{\alpha}[v]$ and $\mathbf{\beta}[u]$. 

\vspace{.2cm}

\noindent \textbf{Relationship to negative sampling.} Although the \textit{negative sampling} strategy \cite{word2vec} was proposed to approximate the objective function of the \textit{Skip-Gram} model for node representation, any rigorous theoretical argument showing the connection between them has not been provided. In Lemma \ref{prop:negative_sampling}, we show that the log-likelihood $\mathcal{L}_B(\mathbf{\alpha}, \mathbf{\beta})$ of the  \textsc{EFGE-Bern} model in fact converges to the objective function of negative sampling given in Eq. \eqref{eq:negative_sampling}. In our implementation,  we adopt negative sampling in order to improve the efficiency of the computation.

\begin{lemma}\label{prop:negative_sampling}
The log-likelihood function $\mathcal{L}_B$ converges to 
\begin{align}\label{eq:negative_sampling}
\!\!\!\!\!\!\sum_{\textbf{w} \in \mathcal{W}} \sum_{1\leq l \leq L}\sum_{|j| \leq \gamma}\!\left[ \log p\big(x_{w_l,w_{l+j}}^{l+j}\big)\!\! +\!\! \sum_{s=1}^{k}\!\!\!\underset{\ \ u \sim q^-}{\!\!\mathbb{E}}\!\!\!\!\!\log p\big(x_{w_l,u}^{l+j}\big)\right]\!\!\!\!\!
\end{align}
for large values of $k$.
\end{lemma}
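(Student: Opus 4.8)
The plan is to exploit the fact that both $\mathcal{L}_B$ and the target objective in Eq.~\eqref{eq:negative_sampling} share the same outer summation $\sum_{\mathbf{w}\in\mathcal{W}}\sum_{1\le l\le L}\sum_{|j|\le\gamma}$, so it suffices to compare the two expressions termwise for a fixed center $w_l$ and offset $j$. First I would reuse the split of $\mathcal{L}_B$ already derived above: for each such pair there is exactly one \emph{positive} term coming from the true context node $u^+=w_{l+j}$, namely $\log\sigma(\eta_{w_l,w_{l+j}})=\log p(x^{l+j}_{w_l,w_{l+j}})$, together with a \emph{negative} sum $\sum_{u\neq w_{l+j}}\log\sigma(-\eta_{w_l,u})$ ranging over every remaining vertex. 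The positive term is \emph{identical} to the first summand inside the bracket of Eq.~\eqref{eq:negative_sampling}, so it requires no argument; the entire content of the lemma therefore reduces to approximating the exhaustive negative sum by the $k$-sample expectation $\sum_{s=1}^{k}\mathbb{E}_{u\sim q^-}\log p(x^{l+j}_{w_l,u})$.

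The key step is to treat this negative sum as an expectation under the noise distribution $q^-$ and to invoke a law-of-large-numbers argument. Drawing negatives $u_1,\dots,u_k$ i.i.d.\ from $q^-$, the Strong Law of Large Numbers yields
\[
\frac{1}{k}\sum_{s=1}^{k}\log\sigma(-\eta_{w_l,u_s}) \xrightarrow[k\to\infty]{\mathrm{a.s.}} \mathbb{E}_{u\sim q^-}\big[\log\sigma(-\eta_{w_l,u})\big],
\]
so that the empirical negative contribution $\sum_{s=1}^{k}\log p(x^{l+j}_{w_l,u_s})$ concentrates, for large $k$, on the second summand of Eq.~\eqref{eq:negative_sampling}. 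I would apply this termwise and then pass the limit through the three \emph{finite} outer sums; this interchange is legitimate because, for fixed embeddings $\Omega$, every $\log\sigma(\cdot)$ term is bounded, which supplies both the integrability needed for the SLLN and the uniform control needed to combine finitely many almost-sure limits.

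The hard part will be making precise the identification between the exhaustive negative sum of $\mathcal{L}_B$, which runs once over all $|\mathcal{V}|-1$ non-context vertices, and the $k$-sample expectation. This forces one to commit to a concrete $q^-$ (e.g.\ the uniform or unigram noise distribution over $\mathcal{V}$) and to track the scaling between ``summing over each negative once'' and ``summing $k$ sampled expectations'': writing the full sum as $(|\mathcal{V}|-1)$ times the empirical average over negatives shows that the exhaustive objective is recovered when the effective number of draws matches the negative population, so the ``large $k$'' regime is exactly the one in which the sampled estimator reproduces the full sum. Pinning down this scaling, together with the boundedness check that justifies both the SLLN and the interchange of limit and summation, is the only genuinely delicate point; the remainder is the bookkeeping of the positive/negative split.
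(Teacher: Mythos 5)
Your proposal follows essentially the same route as the paper's own proof: the positive term is matched identically, the law of large numbers is used to trade the $k$-fold expectation under $q^-$ for an empirical sum over $k$ draws, and the identification $k=|\mathcal{V}|-1$ recovers the exhaustive negative sum of $\mathcal{L}_B$. The only difference is that you spell out the boundedness and scaling bookkeeping (and the dependence on the choice of $q^-$) that the paper's appendix leaves implicit.
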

\begin{proof}
Please see the appendix.
\end{proof}

\subsection{The \textsc{EFGE-Pois} Model}
In this model, we will use the Poisson distribution to capture the relationship between context and center nodes in a random walk sequence. Let $y_{w_l,v}$ be a value indicating the number of occurrences of node $v$ in the context of $w_l$. We assume that $y_{w_l,v}$ follows a Poisson distribution, with the mean value $\tilde{\lambda}_{w_l,v}$ being the number of appearances of node $v$ in the context $\mathcal{N}_{\gamma}^{\textbf{w}}(w_l)$. Similar to the previous model, it can be expressed as $y_{w_l,v}$ = $x_{w_l,v}^{l-\gamma}$ + $\cdots$ + $x_{w_l,v}^{l-1}+x_{w_l,v}^{l+1}$ + $\cdots$ + $x_{w_l,v}^{l+\gamma}$, where $x_{w_l,v}^{l+j} \sim Pois(\lambda_{w_l,v})$ for $-\gamma \leq j \leq \gamma$. That way, we obtain $\tilde{\lambda}_{w_l,v} = \sum_{j=-\gamma}^{\gamma}\lambda_{w_l,v}^{l+j}$, since the sum of independent Poisson random variables is also Poisson.  By plugging the exponential form of the Poisson distribution into  Eq. \eqref{eq:main_objective}, we can derive the objective function $\mathcal{L}_P(\alpha,\beta)$ of the model as:

\begin{align*}
     \sum_{\textbf{w} \in \mathcal{W}} \sum_{1\leq l \leq L}\sum_{v \in \mathcal{V}}\left[\log h(y_{w_l,v})\!+\!\!\Big( \eta_{w_l,v} y_{w_l,v}\! - \exp(\eta_{w_l,v}) \Big)\right],
\end{align*}
where the base measure $h(y_{w_l,v})$ is equal to $1/ y_{w_l,v}!$. Note that, the number of occurrence $y_{w_l,v}$ is equal to $0$ if $v$ does not appear in the context of $w_l\in\mathcal{V}$. Following a similar strategy as in the \textsc{EFGE-Bern} model, the equation can be split into two parts for the cases where $y_{w_l,v} > 0$ and $y_{w_l,v}=0$. That way,  we can adopt the negative sampling strategy (given in Eq. \eqref{eq:negative_sampling}) as follows:

\begin{align*}
&\sum_{\textbf{w} \in \mathcal{W}} \sum_{1\leq l \leq L}\!\sum_{\substack{| j | \leq \gamma \\ u:=w_j}}\!\!\!\Big[\!\!-\!\!\log(x_{w_l,u}^{l+j}!)\! + \!\eta_{w_l,u} x_{w_l,u}^{l+j}\!-\! \exp(\eta_{w_l,u})\Big]\\
&\quad\quad\quad\quad + \!\!\! \sum_{\substack{| j | \leq \gamma \\ u:\not=w_j}}\!\!\!\Big[ -\exp(\eta_{w_l,u}) \Big].
\end{align*}

\noindent Note that, in the \textsc{EFGE-Pois} model, we do not specify any particular link function --- thus, the natural parameter is equal to the product of the embeddings vectors. 

\vspace{.2cm}

\noindent \textbf{Relationship to overlapping community detection.} It can be seen that the objective function of the widely used  \textsc{BigClam}  overlapping community detection method by Yang and Leskovec \cite{bigclam},  can be obtained by unifying  the objectives of the \textsc{EFGE-Bern} and \textsc{EFGE-Pois} models. The relationship is shown in Lemma \ref{label:label2}. Besides, one can say that each entry of the embedding vectors correspond to a  value indicating the membership of a node to a community --- in this case, \textsc{BigClam} restricts the vectors to non-negative values.

\begin{lemma}\label{label:label2}
Let $Z_{w_l,v}$ be independent random variables following Poisson distribution with natural parameter $\eta_{w_l,v}$ defined by $\log(\mathbf{\beta}[w_l]\cdot\mathbf{\alpha}[v])$. Then, the objective function of \textsc{EFGE-Bern} model becomes equal to
\begin{align*}
  \sum_{\textbf{w} \in \mathcal{W}} \sum_{1 \leq l \leq L} \Bigg[&\sum_{v \in \mathcal{N}_{\gamma}(w_l)}\!\!\!\!\!\log\Big(1 - \exp\big(-\mathbf{\beta}[w_l]^{\top}\cdot\mathbf{\alpha}[v]\big)\Big) \\
  -& \sum_{v \not\in \mathcal{N}_{\gamma}(w_l)} \!\!\!\!\!\mathbf{\beta}[w_l]^{\top}\cdot\mathbf{\alpha}[v]\Bigg]
\end{align*}
if the model parameter $\pi_{w_l,v}$ defined by $p(Z_{w_l,v}>0)$.
\end{lemma}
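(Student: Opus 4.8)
The plan is to reinterpret the Bernoulli success probability $\pi_{w_l,v}$ of the \textsc{EFGE-Bern} model as the probability that the auxiliary Poisson variable $Z_{w_l,v}$ is strictly positive, and then to simplify the two resulting log-terms using the specific natural parameter prescribed in the hypothesis. First I would write the \textsc{EFGE-Bern} objective directly in terms of the success probability, rather than through the log-normalized sigmoid form used earlier. Since a context node contributes the observation $y_{w_l,v}=1$ and every other node contributes $y_{w_l,v}=0$, the Bernoulli log-likelihood splits cleanly as
\[
\mathcal{L}_B = \sum_{\textbf{w} \in \mathcal{W}} \sum_{1 \leq l \leq L}\left[\sum_{v \in \mathcal{N}_{\gamma}(w_l)} \log \pi_{w_l,v} + \sum_{v \not\in \mathcal{N}_{\gamma}(w_l)} \log\big(1 - \pi_{w_l,v}\big)\right].
\]

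Next I would compute $p(Z_{w_l,v}>0)$ explicitly. Because the natural parameter of a Poisson distribution equals the logarithm of its mean, the assumption $\eta_{w_l,v} = \log\big(\mathbf{\beta}[w_l]\cdot\mathbf{\alpha}[v]\big)$ forces the mean of $Z_{w_l,v}$ to be exactly $\exp(\eta_{w_l,v}) = \mathbf{\beta}[w_l]\cdot\mathbf{\alpha}[v]$. Evaluating the Poisson mass at zero then gives $p(Z_{w_l,v}=0) = \exp\big(-\mathbf{\beta}[w_l]\cdot\mathbf{\alpha}[v]\big)$, so that the definition $\pi_{w_l,v} := p(Z_{w_l,v}>0)$ yields the two identities $\pi_{w_l,v} = 1 - \exp\big(-\mathbf{\beta}[w_l]\cdot\mathbf{\alpha}[v]\big)$ and $1 - \pi_{w_l,v} = \exp\big(-\mathbf{\beta}[w_l]\cdot\mathbf{\alpha}[v]\big)$.

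Finally I would substitute these two identities into the split objective: the context term becomes $\log\big(1 - \exp(-\mathbf{\beta}[w_l]^{\top}\cdot\mathbf{\alpha}[v])\big)$, while the non-context term collapses via $\log\exp\big(-\mathbf{\beta}[w_l]^{\top}\cdot\mathbf{\alpha}[v]\big) = -\mathbf{\beta}[w_l]^{\top}\cdot\mathbf{\alpha}[v]$, which together reproduce the claimed expression and match the \textsc{BigClam} objective. I do not expect a genuine analytic obstacle, since the argument is fundamentally a change of parameterization; the only points deserving care are conceptual. The first is verifying that redefining $\pi_{w_l,v}$ through the positivity event of $Z_{w_l,v}$ is compatible with the Bernoulli bookkeeping, i.e.\ that the context/non-context partition used in the earlier \textsc{EFGE-Bern} derivation coincides with the $y_{w_l,v}\in\{1,0\}$ split. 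The second is that the stated independence of the $Z_{w_l,v}$ is precisely what justifies handling each center--context pair as a separate factor inside the sum, so that the per-pair substitution is legitimate across the whole objective.
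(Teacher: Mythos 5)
Your proposal is correct and follows essentially the same route as the paper's own proof: split the Bernoulli log-likelihood over context and non-context nodes, identify $\pi_{w_l,v}=1-p(Z_{w_l,v}=0)=1-\exp\big(-\exp(\eta_{w_l,v})\big)$, and substitute $\exp(\eta_{w_l,v})=\mathbf{\beta}[w_l]^{\top}\!\cdot\!\mathbf{\alpha}[v]$. Your direct substitution of the Poisson mean even sidesteps a sign/placement slip in the paper's intermediate line (which writes $\exp(-\eta_{w_l,v})$ where $-\exp(\eta_{w_l,v})$ is meant), so nothing further is needed.
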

\begin{proof}
Please see the appendix.
\end{proof}

\subsection{The \textsc{EFGE-Norm} Model}

If a node $v$ appears in the context of $w_l$ more frequently with respect to other nodes, we can say that $v$ has a higher interaction with $w_l$ than the rest ones. Therefore, we will consider each $y_{w_l,v}$ in this model as an edge weight indicating the relationship between the nodes $w_l$ and $v$. We assume that $x_{w_l,v}^{l+j} \sim \mathcal{N}(1, \sigma^2_+)$ if $v \in \mathcal{N}_{\gamma}(w_l)$, and $x_{w_l,v}^{l+j} \sim \mathcal{N}(0, \sigma^2_-)$ otherwise. Hence, we obtain that $y_{w_l,v}\sim \mathcal{N}(\tilde{\mu},\tilde{\sigma^2})$,  where $\tilde{\mu}$ is the number of occurrences of $v$ in the context if we follow a similar assumption $y_{w_l,v}=\sum_{j=-\gamma}^{\gamma}x_{w_l, v}^{l+j}$ as in the previous model. The definition of the objective function, $\mathcal{L}_{N}(\mathbf{\alpha}, \mathbf{\beta})$, for the \textsc{EFGE-Norm} model is defined as follows:

\begin{align*}
    &\sum_{\textbf{w} \in \mathcal{W}}\sum_{1 \leq l \leq L}\sum_{\substack{| j | \leq \gamma \\ u:=w_j}}\left[ \log h(x_{w_l,u}^{l+j})\! +\! \Big(x_{w_l,u}^{l+j}\frac{\eta_{w_l,u}}{\sigma^+}\!-\!\frac{\eta_{w_l,u}^2}{2} \Big)\right] \nonumber \\
    &\quad\quad\quad\quad +\!\! \sum_{\substack{|j| \leq \gamma \\ u:\not=w_j}}\!\left[ \log h(x_{w_l,u}^{l+j}) +\! \Big(x_{w_l,u}^{l+j}\frac{\eta_{w_l,u}}{\sigma^-}\!-\!\frac{\eta_{w_l,u}^2}{2} \Big)\right],
\end{align*}

\noindent where the base measure $h(x_{w_l,u})$ is $\exp(-x_{w_l,u}^2 /  2\sigma^2)/ \sqrt{2\pi}\sigma$ for known variance. In this model, we choose the link function as  $f(x)= \exp(-x)$, so $\eta_{w_l,u}$ is defined as $\exp(-\beta[w_l]^\top\alpha[u])$.

\subsection{Optimization}\label{sec:optimization}
For the optimization 
we use  \textit{Stochastic Gradient Descent} (SGD) \cite{sgd} to learn representations $\Omega=(\alpha, \beta)$. Since we use exponential family distributions, we have a general form of the objective function given in Eq. \eqref{eq:main_objective_exp}. As it is computationally very expensive to compute gradients for each node pair, we take advantage of the fact that we have formulated the objective function of each model in such a way that it could be divided into two parts according to the values of $x_{w_l,u}^{l+j}$; thus, we adopt the negative sampling strategy, setting sampling size to $k=5$ in all the experiments. For the update of learning parameters and for generating negative samples, we follow the approach described in \cite{deepwalk,word2vec}.

\section{Experimental Evaluation} \label{sec:experiments}
In this section, we evaluate the performance of the proposed models with respect to several node embedding baseline techniques in the node classification and link prediction tasks over various networks shown in Table \ref{tab:networks}. 

\subsubsection*{Baseline methods.} We evaluate the three proposed \textsc{EFGE} models against five state-of-the-art NRL techniques.
$(i)$ \textsc{DeepWalk} \cite{deepwalk} generates a set of node sequences by choosing a node uniformly at random from the neighbours of the node it currently resides.
$(ii)$ \textsc{Node2Vec} \cite{node2vec} relies on a biased random walk strategy, introducing two additional parameters which are used to determine the behaviour of the random walk in visiting nodes close to the one currently residing at. We simply set these parameters to $1.0$. $(iii)$ \textsc{LINE} \cite{line} learns embeddings that are based on first-order and second-order proximity (each one of length $d/2$).
$(iv)$ \textsc{HOPE} \cite{hope} is a matrix factorization method which aims at extracting feature vectors by preserving  higher order patterns of the network (in our experiments, we have used the \textit{Katz} index).
$(v)$ \textsc{NetMF} \cite{implicit_factorization} aims at factorizing the matrix approximated by the pointwise mutual information of center and context pairs. In our experiments, we have used walk length $L=10$, number of walks $N=80$ and window size $\gamma=10$ for all models and the variants of \textsc{EFGE} model are fed with the same node sequences produced by \textsc{Node2Vec}.

\begin{table}[t]
\centering
\resizebox{1.0\columnwidth}{!}{
\begin{tabular}{r|cccccc}
\toprule
\multicolumn{1}{l}{} & $|\mathcal{V}|$ & $|\mathcal{E}|$ & $|\mathcal{K}|$ & $|\mathcal{C}|$ & \textbf{Avg. Degree} & \textbf{Density} \\\midrule
\textsl{CiteSeer} & 3,312 & 4,660 & 6 & 438 & 2.814 & 0.0009 \\
\textsl{Cora} & 2,708 & 5,278 & 7 & 78 & 3.898 & 0.0014 \\
\textsl{DBLP} & 27,199 & 66,832 & 4 & 2,115 & 4.914 & 0.0002 \\
\textsl{AstroPh} & 17,903 & 19,7031 & - & 1 & 22.010 & 0.0012 \\
\textsl{HepTh} & 8,638 & 24,827 & - & 1 & 5.7483 & 0.0007 \\
\textsl{Facebook} & 4,039 & 88,234 & - & 1 & 43.6910 & 0.0108 \\
\textsl{GrQc} & 4,158 & 13,428 & - & 1 & 6.4589 & 0.0016 \\\bottomrule
\end{tabular}
}
\caption{Statistics of network datasets used in the experiments. $|\mathcal{V}|$: number of nodes, $|\mathcal{E}|$: number of edges, $|\mathcal{K}|$: number of labels and $|\mathcal{C}|$: number of connected components.}
\label{tab:networks}
\end{table}

	
	
	
	
	
	
	


\subsection{Node Classification}
\noindent \textbf{Experimental setup.} In the classification task, we aim to predict the correct labels of nodes having access to a limited number of training labels (i.e., nodes with known label). In our experiments, we split the nodes into varying training ratios, from $2\%$ up to $90\%$ in order to better evaluate the models. We perform our experiments applying an one-vs-rest logistic regression classifier with $L_2$ regularization\footnote{We have used the \textit{scikit-learn} package in the implementation.}, computing the Micro-$F_1$ score (the Macro-$F_1$ score over a wide range of training ratios is also presented in the Appendix). We repeat the experiments for $50$ times and report the average score for each network.

\begin{table}[h]
\begin{subtable}[h]{\columnwidth}
\centering
\resizebox{\columnwidth}{!}{%
\begin{tabular}{r|ccccccccc}
 \multicolumn{1}{c}{} & \textbf{2\%} & \textbf{4\%} & \textbf{6\%} & \textbf{8\%} & \textbf{10\%} & \textbf{30\%} & \textbf{50\%} & \textbf{70\%} & \textbf{90\%} \\\midrule
\multirow{1}{*}{\textsc{DeepWalk}} & 0.416 & 0.460 & 0.489 &  0.505 & 0.517 & 0.566 & 0.584 & 0.595 & 0.592 \\
 \multirow{1}{*}{\textsc{Node2Vec}} & \cellcolor[HTML]{F2F3F4}0.450 &  \cellcolor[HTML]{F2F3F4}0.491 & \cellcolor[HTML]{F2F3F4}0.517 & \cellcolor[HTML]{F2F3F4}0.530 &  \cellcolor[HTML]{F2F3F4}0.541 &  \cellcolor[HTML]{F2F3F4}0.585 &  \cellcolor[HTML]{F2F3F4}0.597 &  \cellcolor[HTML]{F2F3F4}0.601 &  \cellcolor[HTML]{F2F3F4}0.599 \\
 \multirow{1}{*}{\textsc{LINE}} & 0.323 & 0.387 & 0.423 & 0.451 & 0.466 & 0.532 & 0.551 & 0.560 & 0.564 \\
 \multirow{1}{*}{\textsc{HOPE}} & \cellcolor[HTML]{F2F3F4}0.196 & \cellcolor[HTML]{F2F3F4}0.205 & \cellcolor[HTML]{F2F3F4}0.210 & \cellcolor[HTML]{F2F3F4}0.204 & \cellcolor[HTML]{F2F3F4}0.219 & \cellcolor[HTML]{F2F3F4}0.256 & \cellcolor[HTML]{F2F3F4}0.277 & \cellcolor[HTML]{F2F3F4}0.299  & \cellcolor[HTML]{F2F3F4}0.320
 \\
 \multirow{1}{*}{\textsc{NetMF}} & 0.451 & 0.496 & 0.526 & 0.540 & 0.552 & 0.590 & 0.603 & 0.604 & 0.608
 \\\midrule
\multirow{1}{*}{\textsc{EFGE-Bern}} & 0.461 & 0.493 & 0.517 & 0.536 & 0.549 & 0.588 & 0.603 & 0.609 & 0.609 \\
\multirow{1}{*}{\textsc{EFGE-Pois}} & \cellcolor[HTML]{F2F3F4}0.484  & \cellcolor[HTML]{F2F3F4} 0.514 & \cellcolor[HTML]{F2F3F4} 0.537 & \cellcolor[HTML]{F2F3F4}0.551 & \cellcolor[HTML]{F2F3F4}\textbf{0.562} & \cellcolor[HTML]{F2F3F4}0.595 & \cellcolor[HTML]{F2F3F4}\textbf{0.606} & \cellcolor[HTML]{F2F3F4}0.611 & \cellcolor[HTML]{F2F3F4}0.613 \\
\multirow{1}{*}{\textsc{EFGE-Norm}} & \multicolumn{1}{c}{\textbf{0.493}} & \multicolumn{1}{c}{\textbf{0.525}} & \multicolumn{1}{c}{\textbf{0.542}} & \multicolumn{1}{c}{\textbf{0.553}} & \multicolumn{1}{c}{0.561} & \multicolumn{1}{c}{\textbf{0.596}} & \multicolumn{1}{c}{\textbf{0.606}} &  \multicolumn{1}{c}{\textbf{0.612}} &  \multicolumn{1}{c}{\textbf{0.616}} \\\bottomrule
\end{tabular}%
}
\caption{\textsl{CiteSeer}}
\label{tab:classification_citeseer}
\end{subtable}
\vfill
\begin{subtable}[h]{\columnwidth}
\centering
\resizebox{\columnwidth}{!}{%
\begin{tabular}{r|ccccccccc}
\multicolumn{1}{l}{} &\textbf{2\%} & \textbf{4\%} & \textbf{6\%} & \textbf{8\%} & \textbf{10\%} & \textbf{30\%} & \textbf{50\%} & \textbf{70\%} & \textbf{90\%} \\\midrule
\multirow{1}{*}{\textsc{DeepWalk}} & 0.621 & 0.689 & 0.715 & 0.732 & 0.747 & 0.802 & 0.819 & 0.826 & 0.833 \\
 \multirow{1}{*}{\textsc{Node2Vec}} &\cellcolor[HTML]{F2F3F4}0.656 & \cellcolor[HTML]{F2F3F4}0.714 & \cellcolor[HTML]{F2F3F4}0.743 & \cellcolor[HTML]{F2F3F4}0.757 & \cellcolor[HTML]{F2F3F4}0.769 & \cellcolor[HTML]{F2F3F4}0.815 & \cellcolor[HTML]{F2F3F4}0.831 & \cellcolor[HTML]{F2F3F4}0.839 & \cellcolor[HTML]{F2F3F4}0.841 \\
 \multirow{1}{*}{\textsc{LINE}} & 0.450 & 0.544 & 0.590 & 0.633 & 0.661 & 0.746 & 0.765 & 0.774 & 0.775 \\
 \multirow{1}{*}{\textsc{HOPE}} & \cellcolor[HTML]{F2F3F4}0.277 &\cellcolor[HTML]{F2F3F4}0.302 & \cellcolor[HTML]{F2F3F4}0.299 & \cellcolor[HTML]{F2F3F4}0.302 & \cellcolor[HTML]{F2F3F4}0.302 & \cellcolor[HTML]{F2F3F4}0.301 &\cellcolor[HTML]{F2F3F4}0.302 & \cellcolor[HTML]{F2F3F4}0.303 & \cellcolor[HTML]{F2F3F4}0.302 \\
 \textsc{NetMF} & 0.636 & 0.716 & 0.748 & 0.767 & 0.773 & \textbf{0.821} & \textbf{0.834} & \textbf{0.841} & \textbf{0.844}
 \\\midrule
\multirow{1}{*}{\textsc{EFGE-Bern}} & 0.668 & 0.720 & 0.743 & 0.759 & 0.767 & 0.808 & 0.823 & 0.834 & 0.838 \\
 \multirow{1}{*}{\textsc{EFGE-Pois}} & \cellcolor[HTML]{F2F3F4}0.680 & \cellcolor[HTML]{F2F3F4} 0.733 & \cellcolor[HTML]{F2F3F4}0.746 & \cellcolor[HTML]{F2F3F4}0.759 & \cellcolor[HTML]{F2F3F4}0.765 & \cellcolor[HTML]{F2F3F4}0.802 & \cellcolor[HTML]{F2F3F4}0.814 &  \cellcolor[HTML]{F2F3F4}0.820 & \cellcolor[HTML]{F2F3F4}0.825 \\
 \multirow{1}{*}{\textsc{EFGE-Norm}} & \multicolumn{1}{c}{\textbf{0.682}} & \multicolumn{1}{c}{\textbf{0.743}} & \multicolumn{1}{c}{\textbf{0.760}} & \multicolumn{1}{c}{\textbf{0.770}} & \multicolumn{1}{c}{\textbf{0.780}} & \multicolumn{1}{c}{0.810} & \multicolumn{1}{l}{0.824} & \multicolumn{1}{c}{0.827} & \multicolumn{1}{l}{0.839} \\\bottomrule
\end{tabular}%
}
\caption{\textsl{Cora}}
\label{tab:classification_cora}
\end{subtable}
\vfill
\begin{subtable}[h]{\columnwidth}
\centering
\resizebox{\columnwidth}{!}{%
\begin{tabular}{r|ccccccccc}
 & \textbf{2\%} & \textbf{4\%} & \textbf{6\%} & \textbf{8\%} & \textbf{10\%} & \textbf{30\%} & \textbf{50\%} & \textbf{70\%} & \textbf{90\%} \\\midrule
\multirow{1}{*}{\textsc{DeepWalk}} & 0.545 & 0.585 & 0.600 & 0.608 & 0.613 & 0.626 & 0.628 & 0.628 & 0.633 \\
 \multirow{1}{*}{\textsc{Node2Vec}} &\cellcolor[HTML]{F2F3F4}0.575 & \cellcolor[HTML]{F2F3F4}0.600 & \cellcolor[HTML]{F2F3F4}0.611 & \cellcolor[HTML]{F2F3F4}0.619 & \cellcolor[HTML]{F2F3F4}0.622 & \cellcolor[HTML]{F2F3F4}0.636 &
 \cellcolor[HTML]{F2F3F4}0.638 &
 \cellcolor[HTML]{F2F3F4}0.639 & \cellcolor[HTML]{F2F3F4}0.639 \\
 \multirow{1}{*}{\textsc{LINE}} & 0.554 & 0.580 & 0.590 & 0.597 & 0.603 & 0.618 & 0.621 & 0.623 & 0.623 \\
 \multirow{1}{*}{\textsc{HOPE}} & \cellcolor[HTML]{F2F3F4}0.379 & \cellcolor[HTML]{F2F3F4}0.378 & \cellcolor[HTML]{F2F3F4}0.379 & \cellcolor[HTML]{F2F3F4}0.379 & \cellcolor[HTML]{F2F3F4}0.379 & \cellcolor[HTML]{F2F3F4}0.379 & \cellcolor[HTML]{F2F3F4}0.379 & \cellcolor[HTML]{F2F3F4}0.378 & \cellcolor[HTML]{F2F3F4}0.380 \\
 \multirow{1}{*}{\textsc{NetMF}} & 0.577 & 0.589 & 0.596 & 0.601 & 0.605 & 0.617 & 0.620 & 0.623 & 0.623
 \\\midrule
\multirow{1}{*}{\textsc{EFGE-Bern}} & 0.573 & 0.598 & 0.610 & 0.617 & 0.622 & 0.634 & 0.638 & 0.638 & 0.638 \\
 \multirow{1}{*}{\textsc{EFGE-Pois}} & \cellcolor[HTML]{F2F3F4}0.588 & \cellcolor[HTML]{F2F3F4}0.605 & \cellcolor[HTML]{F2F3F4}0.614 &
 \cellcolor[HTML]{F2F3F4}0.620 & \cellcolor[HTML]{F2F3F4}0.624 & \cellcolor[HTML]{F2F3F4}0.635 & \cellcolor[HTML]{F2F3F4}0.637 & \cellcolor[HTML]{F2F3F4}0.636 & \cellcolor[HTML]{F2F3F4}0.638 \\
 \multirow{1}{*}{\textsc{EFGE-Norm}} & \textbf{0.603} & \textbf{0.614} & \textbf{0.622} & \textbf{0.624} & \textbf{0.628} & \textbf{0.637} & \textbf{0.640} & \textbf{0.642} &  \textbf{0.641} \\\bottomrule
\end{tabular}%
}
\caption{\textsl{DBLP}}
\label{tab:classification_dblp}
\end{subtable}
\caption{Micro-$F_1$ scores for the node classification experiment for varying training sizes of networks.}
\label{tab:classification_table}
\end{table}

\subsubsection*{Experiment results.}

Table \ref{tab:classification_citeseer} shows the classification performance on the \textsl{CiteSeer} network. In all cases, the proposed models outperform the baselines, with the \textsc{EFGE-Norm} and \textsc{EFGE-Pois} models being the best performing ones. The \textsc{EFGE-Norm} model shows the best performance  among the three \textsc{EFGE} models, for most training sizes. The percentage gain for Micro-$F_1$ score of our best model with respect to the highest baseline score, is varying from $0.61\%$ up to $9.33\%$. For the results on the \textsl{Cora} network shown in  Table \ref{tab:classification_cora}, the \textsc{EFGE-Norm} model outperforms the baseline methods for small training set sizes of up to $10\%$. The \textsc{EFGE-Pois} also shows similar characteristics, while the \textsc{EFGE-Bern} model has comparable performance to \textsc{Node2Vec}.  The \textsc{EFGE-Norm} model has a gain of $4.0\%$  against the best of baselines. For large training sets above $30\%$, \textsc{NetMF} is the best performing model over the \textsl{Cora} network. Lastly, moving on the results on the \textsl{DBLP} network  shown in Table \ref{tab:classification_dblp}, the \textsc{EFGE-Norm} model shows the best performance in all cases under the Micro-$F_1$ scores. The highest Micro-$F_1$ gain of our proposed models against the best performing baseline is around $4.51\%$. 


Overall, the classification experiments show that the proposed \textsc{EFGE-Pois} and \textsc{EFGE-Norm} models perform quite well, outperforming most baselines especially on a limited number of training data. This can qualitatively be explained by the fact that, those exponential family distribution models enable to capture the number of occurrences of a node within the context of another one, while learning the embedding vectors. Of course, the structural properties of the network, such as the existence of community structure, might affect the performance of these models. For instance, as we have seen in the toy example of Fig. \ref{fig:dolphins}, the existence of well defined communities at the \textsl{Dolphins} network, allows the \textsc{EFGE-Pois} model to learn more discriminative embeddings with respect to the underlying communities (as we expect to have repetitions of nodes that belong to the same community while sampling the context of a node based on random walks).


\subsection{Link Prediction}
\noindent \textbf{Experimental set-up.} In the link prediction task, the goal is to predict the missing edges or to estimate possible future connections between nodes. For this experiment, we randomly remove half of the edges of a given network, keeping the residual network connected. Then, we learn node representations  using the residual network. The removed edges as well as a randomly chosen set of the same number of node pairs form the testing set. For the training set, we sample the same number of non-existing edges following the same strategy to have negative samples, and the edges in the residual network are used as positive instances. Since we learn embedding vectors for the nodes of the graph, we use the extracted node representations to build edge feature vectors using the \textit{Hadamard} product operator. Let $a, b \in \mathbb{R}^d$ be the embeddings of two nodes $u, v \in \mathcal{V}$ respectively. Then, under the Hadamard operator, the embedding of the corresponding edge between $u$ and $v$ will be computed as: $[a_1 * b_1, \cdots, a_d * b_d]$. In all experiments, we have used the logistic regression classifier with $L_2$ regularization over the networks listed in Table \ref{tab:networks}.

\subsubsection*{\normalfont \textbf{Experiment results.}} 

\begin{table}[t]
\resizebox{\linewidth}{!}{%
\begin{tabular}{r|ccccc|ccc}
 \multicolumn{1}{l}{}& \rotatebox{75}{\textsc{DeepWalk}} & \rotatebox{75}{\textsc{Node2Vec}} & \rotatebox{75}{\textsc{LINE}} & \rotatebox{75}{\textsc{HOPE}} &
 \rotatebox{75}{\textsc{NetMF}} &
 \rotatebox{75}{\textsc{EFGE-Bern}} & \rotatebox{75}{\textsc{EFGE-Pois}} & {\rotatebox{75}{\textsc{EFGE-Norm}}} \\\midrule
\textsl{Citeseer} & 0.770 & 0.780 & 0.717 & 0.744 & 0.742 & 0.815 & \textbf{0.834} & 0.828 \\
\textsl{Cora} & \cellcolor[HTML]{F2F3F4}0.739 & \cellcolor[HTML]{F2F3F4}0.757 & \cellcolor[HTML]{F2F3F4}0.686 & \cellcolor[HTML]{F2F3F4}0.712 & \cellcolor[HTML]{F2F3F4}0.755 & \cellcolor[HTML]{F2F3F4}0.769 & \cellcolor[HTML]{F2F3F4}0.797 & \cellcolor[HTML]{F2F3F4}\textbf{0.807} \\
\textsl{DBLP} & 0.919 & 0.954 & 0.933 & 0.873 & 0.930 & 0.950 & 0.950 & \textbf{0.955} \\
\textsl{AstroPh} & \cellcolor[HTML]{F2F3F4}0.911 & \cellcolor[HTML]{F2F3F4}0.969 & \cellcolor[HTML]{F2F3F4}0.971 & \cellcolor[HTML]{F2F3F4}0.931 & \cellcolor[HTML]{F2F3F4}0.897 & \cellcolor[HTML]{F2F3F4}0.963 & \cellcolor[HTML]{F2F3F4}0.922 & \cellcolor[HTML]{F2F3F4}\textbf{0.973} \\
\textsl{HepTh} & 0.843 & 0.896 & 0.854 & 0.836 & 0.882 & \textbf{0.898} & 0.885 & 0.896 \\
\textsl{Facebook} & \cellcolor[HTML]{F2F3F4}0.980 & \cellcolor[HTML]{F2F3F4}\textbf{0.992} & \cellcolor[HTML]{F2F3F4}0.986 & \cellcolor[HTML]{F2F3F4}0.975 & \cellcolor[HTML]{F2F3F4}0.987 & \cellcolor[HTML]{F2F3F4}0.991 & \cellcolor[HTML]{F2F3F4}0.991 & \cellcolor[HTML]{F2F3F4}\textbf{0.992} \\
\textsl{GrQc} & 0.921 & \textbf{0.940} & 0.909 & 0.902 & 0.928 & 0.938 & 0.937 & \textbf{0.940}\\\bottomrule
\end{tabular}%
}
\caption{Area Under Curve (AUC) scores for link prediction.}
\label{tab:link_prediction}
\end{table}

Table \ref{tab:link_prediction} shows the area under curve (AUC) scores for the link prediction task. Since the networks used in the node classification experiments consist of disconnected components, we perform the link prediction experiments on the largest connected component. As it can be seen in Table \ref{tab:link_prediction}, the \textsc{EFGE-Norm} model is performing quite well on almost all different types of networks. Although \textsc{Node2Vec} is quite effective  having similar performance in two datasets,  it is outperformed by \textsc{EFGE-Norm} from $0.04\%$ up to $18.29\%$ in the remaining networks.

\subsection{Parameter Sensitivity}
In this subsection, we evaluate how the performance of our models is affected under different parameter settings. In particular, we mainly examine the effect of embedding dimension $d$ and the effect of the window size $\gamma$ used to sample context nodes. More detailed analysis including the effect of the standard deviation $\sigma$ of the \textsc{EFGE-Norm} model is provided in the Appendix.

\subsubsection*{\normalfont \textbf{The effect of dimension size.}}


The dimension size $d$ of embedding vectors is a crucial parameter that can  affect the performance of a model. We have conducted experiments examining the effect of embedding dimension $d$  on the \textsl{Citeseer} network. As it can be seen in Fig. \ref{fig:subim1}, the increase in the dimension size has positive affect for all models over Micro-$F_1$ scores. When the dimension size increases from $32$ up to $224$, we observe a gain of around $18\%$  for  training set constructed from $50\%$ of the network.

\subsubsection*{The effect of window size.}


\begin{figure}[h]
\centering
\begin{subfigure}{0.49\columnwidth}
\includegraphics[width=0.92\columnwidth]{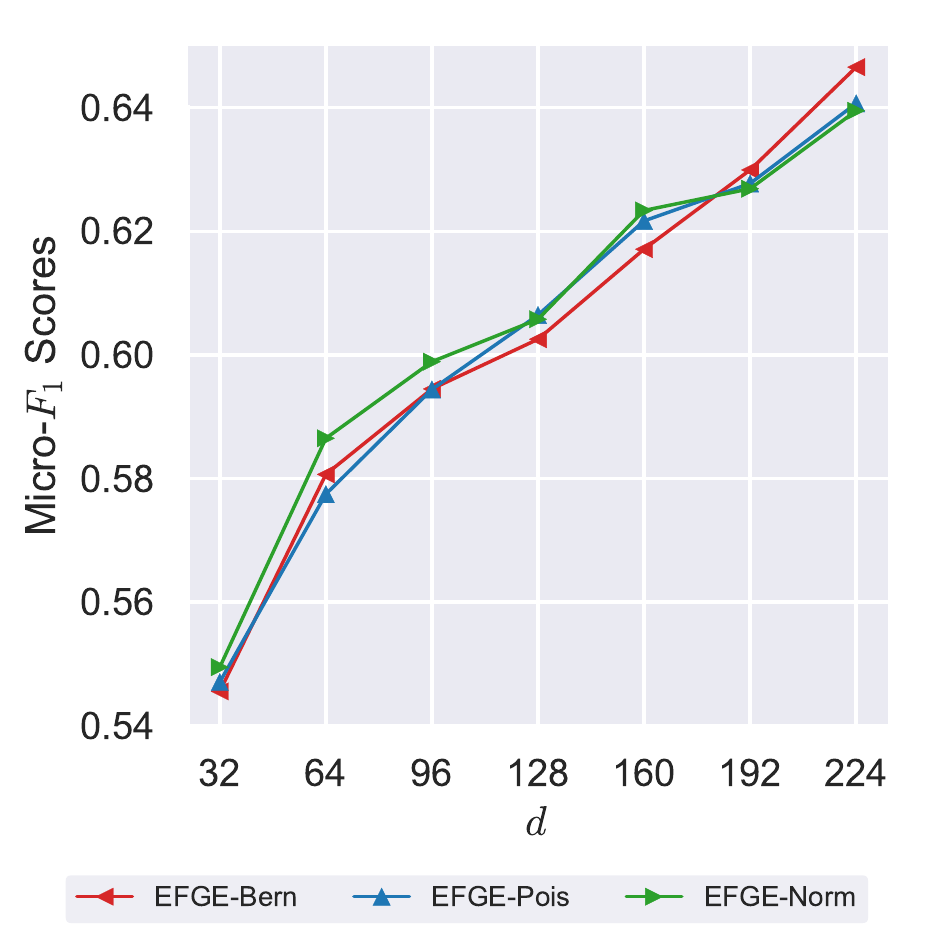}
\caption{Dimension of embeddings}
\label{fig:subim1}
\end{subfigure}
\begin{subfigure}{0.49\columnwidth}
\includegraphics[width=0.92\columnwidth]{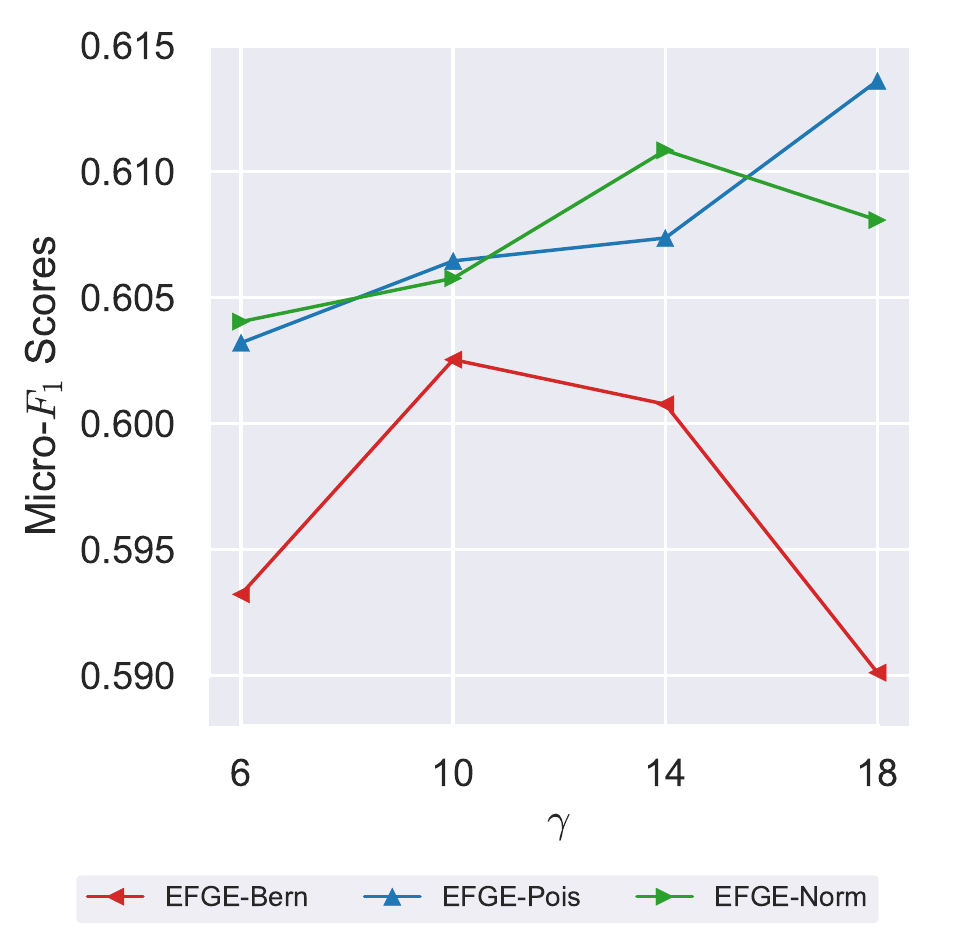} 
\caption{Window size}
\label{fig:subim2}
\end{subfigure}
\caption{Influence of dimension $d$ and  window size $\gamma$  on the \textsl{CiteSeer} network for the training set ratio of $50\%$.}
\label{fig:image2}
\end{figure}

Since the appearance or the number of occurrences of a node in the context of a center node is of importance for the \textsc{EFGE} models, we analyze their sensitivity under different window sizes $\gamma$ on the \textsl{CiteSeer} network. Figure \ref{fig:subim2} depicts the Micro-$F_1$ scores for training set composed by $50\%$  of the network. As we can observe, both the \textsc{EFGE-Norm} and  \textsc{EFGE-Pois} models have the tendency to show better performance for large window sizes, since they directly model the number of occurrences of nodes within a random walk sequence --- and potentially are benefited by a large $\gamma$ value. On the contrary, the performance of the \textsc{EFGE-Bern} model (which in fact captures simple co-occurrence relationships, resembling \textsc{Node2Vec}) deteriorates for large window sizes.




\section{Related Work}\label{sec:related}

\textbf{Network representation learning.} The traditional unsupervised feature learning methods aim at factorizing some matrix representation of the graph, which has been designed by taking into account  the properties  of a given network \cite{survey_hamilton_rex_leskovec}. Laplacian Eigenmaps \cite{laplacian_eigenmap} and IsoMap \cite{isomap} are just some of those approaches targeting to preserve  first-order proximity of nodes. More recently,  proposed algorithms including \textsc{GraRep} \cite{grarep} and HOPE \cite{hope}, aim at preserving higher order proximities. Nevertheless, despite the fact that matrix factorization approaches offer an elegant way to capture the desired properties, they mainly suffer from their time complexity.  \textsc{LINE} \cite{line} and \textsc{SDNE} \cite{wang2016structural} both optimize more sophisticated objective functions that preserve both first- and second-order proximities at the cost of an increased computational complexity, while \textsc{Verse} \cite{verse-www18} utilizes node similarity measures to learn node representations.
In addition, community structure properties can also be taken into account in the NRL process. The authors of \cite{DBLP:conf/aaai/WangCWP0Y17},   proposed a matrix factorization algorithm that incorporates the community structure into the embedding process, implicitly focusing on the quantity of modularity.


Random walk-based methods \cite{survey_hamilton_rex_leskovec} have gained considerable attention, mainly due the efficiency of the \textit{Skip-Gram} model. \textsc{DeepWalk} performs uniform random walks to sample context nodes, while \textsc{Node2Vec} and its extensions \cite{node2vec,biasedwalk} simulate biased-random walks that provide a trade-off between breadth-first and depth-first graph traversals. Following this line of research, distinct random sampling strategies have been proposed and various methods have emerged \cite{random_walk_struc2vec}. In all those cases though, the \textit{softmax} function is used model center-context relationships, something that might restrict the performance of the models. More recently, \textit{Skip-Gram}-based methods were extended to multiple vector representations, aiming at capturing multiple roles of nodes in the case of inherent overlapping communities \cite{epasto-splitter}. In addition, it was recently shown that   \textsc{DeepWalk} and \textsc{Node2Vec}  implicitly perform matrix factorizations \cite{implicit_factorization,netsmf-www2019}. 

\par Recently, there is an intense research effort on Graph Neural Network (GNN) architectures \cite{gnn}, including graph convolutional networks, autoencoders and diffusion models. Most of these approaches are supervised or semi-supervised, requiring labeled data in the training step, while here we are interested in unsupervised models. 

\vspace{.2cm}

\noindent \textbf{Exponential families.} In the related literature,  exponential family distributions have been utilized to learn embeddings for high-dimensional data of different types (e.g., market basket analysis) \cite{expon_fam_emb,NIPS2017_7067,NIPS2017_6629}. As we have presented, our approach generalizes exponential family embedding models to graphs.

\section{Conclusions} \label{sec:conclusions}
In this paper, we  introduced exponential family graph embeddings (\textsc{EFGE}), proposing three instances  (\textsc{EFGE-Bern}, \textsc{EFGE-Pois} and \textsc{EFGE-Norm}) that generalize random walk approaches to exponential families.  The benefit of these models stems from the fact that they allow to utilize exponential family distributions over center-context node pairs, going beyond simple co-occurrence relationships. We have also examined how the objective functions of the models can be expressed in a way that negative sampling can be applied to scale the learning process. The experimental results have demonstrated that instances of the EFGE model are able to outperform widely used baseline methods. As future work, we plan to further generalize the model to other exponential family distributions.

\bibliography{main}

\begin{thebibliography}{}

\bibitem[\protect\citeauthoryear{Andersen}{1970}]{exp-family}
Andersen, E.
\newblock 1970.
\newblock Sufficiency and exponential families for discrete sample spaces.
\newblock {\em Journal of the American Statistical Association.}
  65(331):1248--1255.

\bibitem[\protect\citeauthoryear{Belkin and Niyogi}{2001}]{laplacian_eigenmap}
Belkin, M., and Niyogi, P.
\newblock 2001.
\newblock Laplacian eigenmaps and spectral techniques for embedding and
  clustering.
\newblock In {\em NIPS},  585--591.

\bibitem[\protect\citeauthoryear{Bottou}{1991}]{sgd}
Bottou, L.
\newblock 1991.
\newblock Stochastic gradient learning in neural networks.
\newblock In {\em Proceedings of Neuro-Nimes. EC2}.

\bibitem[\protect\citeauthoryear{Cai, Zheng, and
  Chang}{2018}]{survey_cai_zheng_chen}
Cai, H.; Zheng, V.~W.; and Chang, K.~C.
\newblock 2018.
\newblock A comprehensive survey of graph embedding: Problems, techniques, and
  applications.
\newblock {\em {IEEE} Trans. Knowl. Data Eng.} 30(9):1616--1637.

\bibitem[\protect\citeauthoryear{Cao, Lu, and Xu}{2015}]{grarep}
Cao, S.; Lu, W.; and Xu, Q.
\newblock 2015.
\newblock Grarep: Learning graph representations with global structural
  information.
\newblock In {\em CIKM},  891--900.

\bibitem[\protect\citeauthoryear{{{\c{C}}elikkanat} and
  {Malliaros}}{2018}]{our_tne}
{{\c{C}}elikkanat}, A., and {Malliaros}, F.~D.
\newblock 2018.
\newblock {TNE: A Latent Model for Representation Learning on Networks}.
\newblock In {\em NeurIPS Relational Representation Learning (R2L) Workshop}.

\bibitem[\protect\citeauthoryear{{{\c{C}}elikkanat} and
  {Malliaros}}{2019}]{our_kernel_embedding}
{{\c{C}}elikkanat}, A., and {Malliaros}, F.~D.
\newblock 2019.
\newblock {Kernel Node Embeddings}.
\newblock In {\em GlobalSIP}.

\bibitem[\protect\citeauthoryear{Chakrabarti, Faloutsos, and
  McGlohon}{2010}]{chakr}
Chakrabarti, D.; Faloutsos, C.; and McGlohon, M.
\newblock 2010.
\newblock {\em Graph mining: Laws and generators. {\normalfont{Chapter in}}
  Managing and Mining Graph Data, Aggarwal, C.C., Wang, H. (eds.)}.
\newblock Springer.

\bibitem[\protect\citeauthoryear{Epasto and Perozzi}{2019}]{epasto-splitter}
Epasto, A., and Perozzi, B.
\newblock 2019.
\newblock {Is a Single Embedding Enough? Learning Node Representations That
  Capture Multiple Social Contexts}.
\newblock In {\em WWW},  394--404.

\bibitem[\protect\citeauthoryear{Goyal and
  Ferrara}{2018}]{survey_goyal_ferrara}
Goyal, P., and Ferrara, E.
\newblock 2018.
\newblock Graph embedding techniques, applications, and performance: {A}
  survey.
\newblock {\em Knowledge-Based Systems} 151:78--94.

\bibitem[\protect\citeauthoryear{Grover and Leskovec}{2016}]{node2vec}
Grover, A., and Leskovec, J.
\newblock 2016.
\newblock {Node2Vec}: Scalable feature learning for networks.
\newblock In {\em KDD},  855--864.

\bibitem[\protect\citeauthoryear{Hamilton, Ying, and
  Leskovec}{2017}]{survey_hamilton_rex_leskovec}
Hamilton, W.~L.; Ying, R.; and Leskovec, J.
\newblock 2017.
\newblock Representation learning on graphs: Methods and applications.
\newblock {\em {IEEE} Data Eng. Bull.} 40(3):52--74.

\bibitem[\protect\citeauthoryear{Liben-Nowell and
  Kleinberg}{2007}]{Liben-Nowell:2007}
Liben-Nowell, D., and Kleinberg, J.
\newblock 2007.
\newblock The link-prediction problem for social networks.
\newblock {\em J. Am. Soc. Inf. Sci. Technol.} 58(7):1019--1031.

\bibitem[\protect\citeauthoryear{Liu \bgroup et al\mbox.\egroup
  }{2017}]{NIPS2017_7067}
Liu, L.-P.; Ruiz, F. J.~R.; Athey, S.; and Blei, D.~M.
\newblock 2017.
\newblock Context selection for embedding models.
\newblock In {\em NIPS},  4816--4825.

\bibitem[\protect\citeauthoryear{Mikolov \bgroup et al\mbox.\egroup
  }{2013}]{word2vec}
Mikolov, T.; Sutskever, I.; Chen, K.; Corrado, G.; and Dean, J.
\newblock 2013.
\newblock Distributed representations of words and phrases and their
  compositionality.
\newblock In {\em NIPS},  3111--3119.

\bibitem[\protect\citeauthoryear{Nguyen and Malliaros}{2018}]{biasedwalk}
Nguyen, D., and Malliaros, F.~D.
\newblock 2018.
\newblock {BiasedWalk}: Biased sampling for representation learning on graphs.
\newblock In {\em Big Data},  4045--4053.

\bibitem[\protect\citeauthoryear{Ou \bgroup et al\mbox.\egroup }{2016}]{hope}
Ou, M.; Cui, P.; Pei, J.; Zhang, Z.; and Zhu, W.
\newblock 2016.
\newblock Asymmetric transitivity preserving graph embedding.
\newblock In {\em KDD},  1105--1114.

\bibitem[\protect\citeauthoryear{Perozzi, Al-Rfou, and Skiena}{2014}]{deepwalk}
Perozzi, B.; Al-Rfou, R.; and Skiena, S.
\newblock 2014.
\newblock {DeepWalk}: Online learning of social representations.
\newblock In {\em KDD},  701--710.

\bibitem[\protect\citeauthoryear{Qiu \bgroup et al\mbox.\egroup
  }{2018}]{implicit_factorization}
Qiu, J.; Dong, Y.; Ma, H.; Li, J.; Wang, K.; and Tang, J.
\newblock 2018.
\newblock {Network Embedding As Matrix Factorization: Unifying DeepWalk, LINE,
  PTE, and Node2Vec}.
\newblock In {\em WSDM},  459--467.

\bibitem[\protect\citeauthoryear{Qiu \bgroup et al\mbox.\egroup
  }{2019}]{netsmf-www2019}
Qiu, J.; Dong, Y.; Ma, H.; Li, J.; Wang, C.; Wang, K.; and Tang, J.
\newblock 2019.
\newblock {NetSMF: Large-Scale Network Embedding As Sparse Matrix
  Factorization}.
\newblock In {\em WWW},  1509--1520.

\bibitem[\protect\citeauthoryear{Ribeiro, Saverese, and
  Figueiredo}{2017}]{random_walk_struc2vec}
Ribeiro, L.~F.; Saverese, P.~H.; and Figueiredo, D.~R.
\newblock 2017.
\newblock {Struc2Vec}: Learning node representations from structural identity.
\newblock In {\em KDD},  385--394.

\bibitem[\protect\citeauthoryear{Rudolph \bgroup et al\mbox.\egroup
  }{2016}]{expon_fam_emb}
Rudolph, M.; Ruiz, F.; Mandt, S.; and Blei, D.
\newblock 2016.
\newblock Exponential family embeddings.
\newblock In {\em NIPS}. Curran Associates Inc.
\newblock  478--486.

\bibitem[\protect\citeauthoryear{Rudolph \bgroup et al\mbox.\egroup
  }{2017}]{NIPS2017_6629}
Rudolph, M.; Ruiz, F.; Athey, S.; and Blei, D.
\newblock 2017.
\newblock Structured embedding models for grouped data.
\newblock In {\em NIPS},  251--261.

\bibitem[\protect\citeauthoryear{Tang and
  Liu}{2009}]{relational_learning_social_dim}
Tang, L., and Liu, H.
\newblock 2009.
\newblock Relational learning via latent social dimensions.
\newblock In {\em KDD},  817--826.

\bibitem[\protect\citeauthoryear{Tang \bgroup et al\mbox.\egroup }{2015}]{line}
Tang, J.; Qu, M.; Wang, M.; Zhang, M.; Yan, J.; and Mei, Q.
\newblock 2015.
\newblock {LINE}: Large-scale information network embedding.
\newblock In {\em WWW},  1067--1077.

\bibitem[\protect\citeauthoryear{Tenenbaum, Silva, and Langford}{2000}]{isomap}
Tenenbaum, J.~B.; Silva, V.~d.; and Langford, J.~C.
\newblock 2000.
\newblock A global geometric framework for nonlinear dimensionality reduction.
\newblock {\em Science} 290(5500):2319--2323.

\bibitem[\protect\citeauthoryear{Tsitsulin \bgroup et al\mbox.\egroup
  }{2018}]{verse-www18}
Tsitsulin, A.; Mottin, D.; Karras, P.; and M\"{u}ller, E.
\newblock 2018.
\newblock {VERSE}: Versatile graph embeddings from similarity measures.
\newblock In {\em WWW},  539--548.

\bibitem[\protect\citeauthoryear{Vishwanathan \bgroup et al\mbox.\egroup
  }{2010}]{vishwanathan2010graph}
Vishwanathan, S. V.~N.; Schraudolph, N.~N.; Kondor, R.; and Borgwardt, K.~M.
\newblock 2010.
\newblock Graph kernels.
\newblock {\em J Mach Learn Res} 11:1201--1242.

\bibitem[\protect\citeauthoryear{Wang \bgroup et al\mbox.\egroup
  }{2017}]{DBLP:conf/aaai/WangCWP0Y17}
Wang, X.; Cui, P.; Wang, J.; Pei, J.; Zhu, W.; and Yang, S.
\newblock 2017.
\newblock Community preserving network embedding.
\newblock In {\em AAAI},  203--209.

\bibitem[\protect\citeauthoryear{Wang, Cui, and Zhu}{2016}]{wang2016structural}
Wang, D.; Cui, P.; and Zhu, W.
\newblock 2016.
\newblock {Structural Deep Network Embedding}.
\newblock In {\em KDD},  1225--1234.

\bibitem[\protect\citeauthoryear{Wu \bgroup et al\mbox.\egroup }{2019}]{gnn}
Wu, Z.; Pan, S.; Chen, F.; Long, G.; Zhang, C.; and Yu, P.~S.
\newblock 2019.
\newblock A comprehensive survey on graph neural networks.
\newblock {\em arXiv}.
\newblock 1901.00596.

\bibitem[\protect\citeauthoryear{Yang and Leskovec}{2013}]{bigclam}
Yang, J., and Leskovec, J.
\newblock 2013.
\newblock Overlapping community detection at scale: A nonnegative matrix
  factorization approach.
\newblock In {\em WSDM},  587--596.

\end{thebibliography}
\bibliographystyle{aaai}

\appendix
\section{Appendix}\label{sec:appendix}

\section{Proofs of Lemmas}
\setcounter{theorem}{0}
\begin{lemma}
The log-likelihood function $\mathcal{L}_B$ converges to 
\begin{align*}
\sum_{\textbf{w} \in \mathcal{W}} \sum_{1\leq l \leq L}\!\sum_{| j | \leq \gamma}\Big[ \log p(x_{w_l,w_{l+j}}^{l+j})\! + \! \!\sum_{s=1}^{k}\underset{\substack{u \sim q^-}}{\mathbb{E}}\big[\log p(x_{w_l,u}^{l+j})\big]\Big]
\end{align*}
for large values of $k$.
\end{lemma}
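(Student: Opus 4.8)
The plan is to begin from the positive/negative decomposition of $\mathcal{L}_B$ already obtained in the main text and to match it, piece by piece, against Eq.~\eqref{eq:negative_sampling}. First I would substitute the sigmoid form of the Bernoulli log-likelihood to write
\[
\mathcal{L}_B = \sum_{\textbf{w}\in\mathcal{W}}\sum_{1\leq l\leq L}\sum_{|j|\leq\gamma}\Bigl[\log\sigma(\eta_{w_l,w_{l+j}}) + \sum_{u\neq w_{l+j}}\log\sigma(-\eta_{w_l,u})\Bigr].
\]
The positive summand satisfies $\log\sigma(\eta_{w_l,w_{l+j}})=\log p(x_{w_l,w_{l+j}}^{l+j})$, so it already coincides term by term with the first summand of Eq.~\eqref{eq:negative_sampling}. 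Consequently the whole argument reduces to relating the deterministic ``full negative'' sum $\sum_{u\neq w_{l+j}}\log\sigma(-\eta_{w_l,u})$ to the $k$-fold expectation $\sum_{s=1}^{k}\mathbb{E}_{u\sim q^-}[\log p(x_{w_l,u}^{l+j})] = k\,\mathbb{E}_{u\sim q^-}[\log\sigma(-\eta_{w_l,u})]$.

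The next step is to reinterpret the full negative sum as an (unnormalised) expectation under the noise distribution $q^-$. Taking $q^-$ uniform over $\mathcal{V}\setminus\{w_{l+j}\}$ yields the algebraic identity
\[
\sum_{u\neq w_{l+j}}\log\sigma(-\eta_{w_l,u}) = (|\mathcal{V}|-1)\,\mathbb{E}_{u\sim q^-}\bigl[\log\sigma(-\eta_{w_l,u})\bigr],
\]
so the negative part of $\mathcal{L}_B$ equals $k\,\mathbb{E}_{u\sim q^-}[\log\sigma(-\eta_{w_l,u})]$ exactly when $k=|\mathcal{V}|-1$. To connect this to the sampled objective actually used in practice, I would regard the $k$ negatives as i.i.d.\ draws $u_1,\dots,u_k\sim q^-$ and invoke the strong law of large numbers: because the embeddings are fixed and $\mathcal{V}$ is finite, $\log\sigma(-\eta_{w_l,u})$ takes finitely many finite values and is therefore bounded, so $\tfrac{1}{k}\sum_{s=1}^{k}\log\sigma(-\eta_{w_l,u_s})\to\mathbb{E}_{u\sim q^-}[\log\sigma(-\eta_{w_l,u})]$ almost surely as $k\to\infty$. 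Hence the sampled negative objective concentrates on $k\,\mathbb{E}_{u\sim q^-}[\log\sigma(-\eta_{w_l,u})]$, which is exactly the negative part of Eq.~\eqref{eq:negative_sampling}; reinstating the outer sums over $\textbf{w}$, $l$ and $j$, which pass through unchanged, completes the matching.

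The main obstacle is the bookkeeping between the deterministic sum over all $|\mathcal{V}|-1$ non-context nodes that appears in $\mathcal{L}_B$ and the $k$ expectation terms of Eq.~\eqref{eq:negative_sampling}: these agree only once the scaling factor $(|\mathcal{V}|-1)/k$ and the form of $q^-$ are pinned down, so the statement is best read in the regime where $k$ grows toward $|\mathcal{V}|-1$ (equivalently, where the sampled estimator saturates the full negative sum). The boundedness of $\log\sigma(-\eta_{w_l,u})$ for fixed embeddings on a finite vertex set is what makes the law-of-large-numbers step immediate and legitimises exchanging the limit with the finite outer summations; the only care needed is to keep the positive term outside this limiting argument, since it is already exact.
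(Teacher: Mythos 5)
Your argument is essentially the paper's own proof run in reverse: the appendix starts from the sampled objective of Eq.~\eqref{eq:negative_sampling}, replaces the $k$-fold expectation by the empirical sum via the law of large numbers, and sets $k=|\mathcal{V}|-1$ to recover the full negative sum of $\mathcal{L}_B$, which is exactly the identity you establish going the other way. Your version is slightly more explicit about the assumption that $q^-$ is uniform over the non-context nodes (which the paper leaves implicit when it equates the full sum with $k$ expectations), but the decomposition, the key identity, and the limiting argument are the same.
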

\begin{proof}
Let $q^-(\cdot|w_l)$ be the true conditional distribution of a random walk method for generating context nodes defined over $\mathcal{V}$. Then, it can be written that

\begin{align*}
&\sum_{\textbf{w} \in \mathcal{W}} \sum_{1\leq l \leq L}\sum_{| j | \leq \gamma }\! \log p(x^{l+j}_{w_l,v_{l+j}})\! + \!  \sum_{s=1}^{k}\underset{u \sim q^-}{\mathbb{E}}\big[\log p(x^{l+j}_{w_l,u})\big]\\
\approx& \sum_{\textbf{w} \in \mathcal{W}} \sum_{1\leq l \leq L}\sum_{| j | \leq \gamma } \log p(x^{l+j}_{w_l,v_{l+j}})\! + \! k\frac{1}{k}\sum_{\substack{s=1 \\ u_s \sim q^-}}^{k}\log p(x^{l+j}_{w_l,u_s})\\
=& \sum_{\textbf{w} \in \mathcal{W}} \sum_{1\leq l \leq L}\sum_{ | j | \leq \gamma } \log p(x^{l+j}_{w_l,v_{l+j}}) + \sum_{\substack{s=1 \\ u_s \sim q^-}}^{k}\log p(x^{l+j}_{w_l,u_s})\\
\approx& \sum_{\textbf{w} \in \mathcal{W}} \sum_{1\leq l \leq L}\sum_{  \substack{|  j | \leq \gamma \\ u := v_{l+j}}  } \log p(x^{l+j}_{w_l,u}) + \sum_{\substack{| j | \leq \gamma \\ u :\not= w_{l+j}}}\log p(x^{l+j}_{w_l,u})\\
=& \mathcal{L}_B(\alpha, \beta),
\end{align*}
\noindent where the second line follows from the law of large numbers for the sample size of $k$, and $k$ is set to $|\mathcal{V}|-1$ in the fourth line.
\end{proof}

\begin{lemma}
Let $Z_{w_l,v}$ be independent random variables following Poisson distribution with natural parameter $\eta_{w_l,v}$ defined by $\log(\mathbf{\beta}[w_l]\cdot\mathbf{\alpha}[v])$. Then the objective function of \textsc{EFGE-Bern} model becomes equal to
\begin{align*}
   \sum_{\textbf{w} \in \mathcal{W}} \sum_{1 \leq l \leq L} \Bigg[&\sum_{v \in \mathcal{N}_{\gamma}(w_l)}\log\Big(1 - \exp\big(-\mathbf{\beta}[w_l]^{\top}\!\cdot\!\mathbf{\alpha}[v]\big)\Big) \\
   -& \sum_{v \not\in \mathcal{N}_{\gamma}(w_l)} \mathbf{\beta}[w_l]^{\top}\!\cdot\!\mathbf{\alpha}[v]\Bigg],
\end{align*}
if the model parameter $\pi_{w_l,v}$ is defined by $p(Z_{w_l,v}>0)$.
\end{lemma}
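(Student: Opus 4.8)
The plan is to recognize the statement as a reparameterization of the already-derived EFGE-Bern objective: the distribution over $y_{w_l,v}$ stays Bernoulli, but its success parameter $\pi_{w_l,v}$ is now supplied by the tail probability of an auxiliary Poisson variable rather than by a sigmoid of the inner product. I would therefore first rewrite $\mathcal{L}_B$ in terms of $\pi_{w_l,v}$ and $1-\pi_{w_l,v}$, then evaluate the Poisson tail, and finally substitute.

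For the first step, recall that in the EFGE-Bern model $y_{w_l,v}=1$ exactly when $v\in\mathcal{N}_{\gamma}(w_l)$ and $y_{w_l,v}=0$ otherwise, so $\log p(y_{w_l,v})$ equals $\log\pi_{w_l,v}$ on context nodes and $\log(1-\pi_{w_l,v})$ on non-context nodes. Splitting the inner sum accordingly gives
\[
\mathcal{L}_B=\sum_{\textbf{w}\in\mathcal{W}}\sum_{1\leq l\leq L}\Bigg[\sum_{v\in\mathcal{N}_{\gamma}(w_l)}\log\pi_{w_l,v}+\sum_{v\not\in\mathcal{N}_{\gamma}(w_l)}\log(1-\pi_{w_l,v})\Bigg].
\]

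The second step evaluates $\pi_{w_l,v}=p(Z_{w_l,v}>0)$. Since $Z_{w_l,v}$ is Poisson with natural parameter $\eta_{w_l,v}=\log(\mathbf{\beta}[w_l]^{\top}\cdot\mathbf{\alpha}[v])$, its mean is $\lambda_{w_l,v}=\exp(\eta_{w_l,v})=\mathbf{\beta}[w_l]^{\top}\cdot\mathbf{\alpha}[v]$. The Poisson pmf gives $p(Z_{w_l,v}=0)=\exp(-\lambda_{w_l,v})$, so $\pi_{w_l,v}=1-\exp(-\mathbf{\beta}[w_l]^{\top}\cdot\mathbf{\alpha}[v])$ and $1-\pi_{w_l,v}=\exp(-\mathbf{\beta}[w_l]^{\top}\cdot\mathbf{\alpha}[v])$. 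Substituting these two identities into the displayed objective turns the context term into $\log(1-\exp(-\mathbf{\beta}[w_l]^{\top}\cdot\mathbf{\alpha}[v]))$ and collapses the non-context term into $\log\exp(-\mathbf{\beta}[w_l]^{\top}\cdot\mathbf{\alpha}[v])=-\mathbf{\beta}[w_l]^{\top}\cdot\mathbf{\alpha}[v]$, which is precisely the claimed expression.

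I do not expect a genuine obstacle, as the argument is a one-line Poisson computation followed by substitution. The only point that needs care is the natural-parameter convention: because $\eta$ is defined as the logarithm of the inner product, the Poisson mean equals the inner product itself, and it is exactly this choice that makes $p(Z_{w_l,v}=0)=\exp(-\mathbf{\beta}[w_l]^{\top}\cdot\mathbf{\alpha}[v])$ linear in the embeddings after taking logs, yielding the $-\mathbf{\beta}[w_l]^{\top}\cdot\mathbf{\alpha}[v]$ term and the connection to the \textsc{BigClam} objective.
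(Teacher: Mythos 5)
Your proposal is correct and follows essentially the same route as the paper's proof: split $\mathcal{L}_B$ over context and non-context nodes, express the Bernoulli parameter as the Poisson tail $\pi_{w_l,v}=1-p(Z_{w_l,v}=0)=1-\exp(-\exp(\eta_{w_l,v}))$, and substitute $\exp(\eta_{w_l,v})=\mathbf{\beta}[w_l]^{\top}\cdot\mathbf{\alpha}[v]$. If anything, your handling of the natural-parameter convention is slightly cleaner than the paper's intermediate step, which contains a sign/placement typo ($\exp(-\eta_{w_l,v})$ where $-\exp(\eta_{w_l,v})$ is meant) before arriving at the same final expression.
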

\begin{proof}
Let $y_{w_l,v}$ follow a Bernoulli distribution with parameter $\pi_{w_l,v}$ and it is equal to $1$ if $v \in \mathcal{N}_{\gamma}(w_l)$, and 0 otherwise. Then, the objective function $\mathcal{L}_{B}(\alpha, \beta)$ can be divided into parts as follows: 

\begin{align*}
     \mathcal{L}_{\mathcal{B}} &=\sum_{\textbf{w} \in \mathcal{W}} \sum_{1 \leq l \leq L}\Bigg[\sum_{v \in \mathcal{N}_{\gamma}(w_l)}\!\!\!\log p(y_{w_l,v})\! +\!\!\!\!\!\!\!\! \sum_{v \not\in \mathcal{N}_{\gamma}(w_l)}\!\!\!\!\!\!\log p(y_{w_l,v})\Bigg]\\
     &= \sum_{\textbf{w} \in \mathcal{W}} \sum_{1\leq l\leq L}\Bigg[\sum_{v \in \mathcal{N}_{\gamma}(w_l)}\!\!\!\log\big(1-p(z_{w_l,v}=0)\big)\\ & \quad\quad\quad\quad\quad\quad\quad\quad\quad\quad\quad\quad +\!\!\!\!\!\!\sum_{v \not\in \mathcal{N}_{\gamma}(v_i)}\!\!\log p(z_{w_l,v}=0) \Bigg] \\
     &= \sum_{\textbf{w} \in \mathcal{W}} \sum_{1 \leq l \leq L} \Bigg[\sum_{v \in \mathcal{N}_{\gamma}(w_l)}\!\!\!\log\Big(1 - \exp\big(-\exp(\eta_{w_l,v})\big)\Big) \\&\quad\quad\quad\quad\quad\quad\quad\quad\quad\quad\quad\quad + \sum_{v \not\in \mathcal{N}_{\gamma}(w_l)} \exp(-\eta_{w_l,v})\Bigg]\\
     &= \sum_{\textbf{w} \in \mathcal{W}} \sum_{1 \leq l \leq L} \Bigg[\sum_{v \in \mathcal{N}_{\gamma}(w_l)}\!\!\!\log\Big(1 \!-\! \exp\big(-\mathbf{\beta}[w_l]^{\top}\!\!\!\cdot\!\mathbf{\alpha}[v]\big)\Big) \\& \quad\quad\quad\quad\quad\quad\quad\quad\quad\quad\quad\quad - \sum_{v \not\in \mathcal{N}_{\gamma}(w_l)} \mathbf{\beta}[w_l]^{\top}\!\cdot\!\mathbf{\alpha}[v]\Bigg]
\end{align*}
\end{proof}



\begin{figure*}[h]
  \centering
  \includegraphics[width=.745\linewidth]{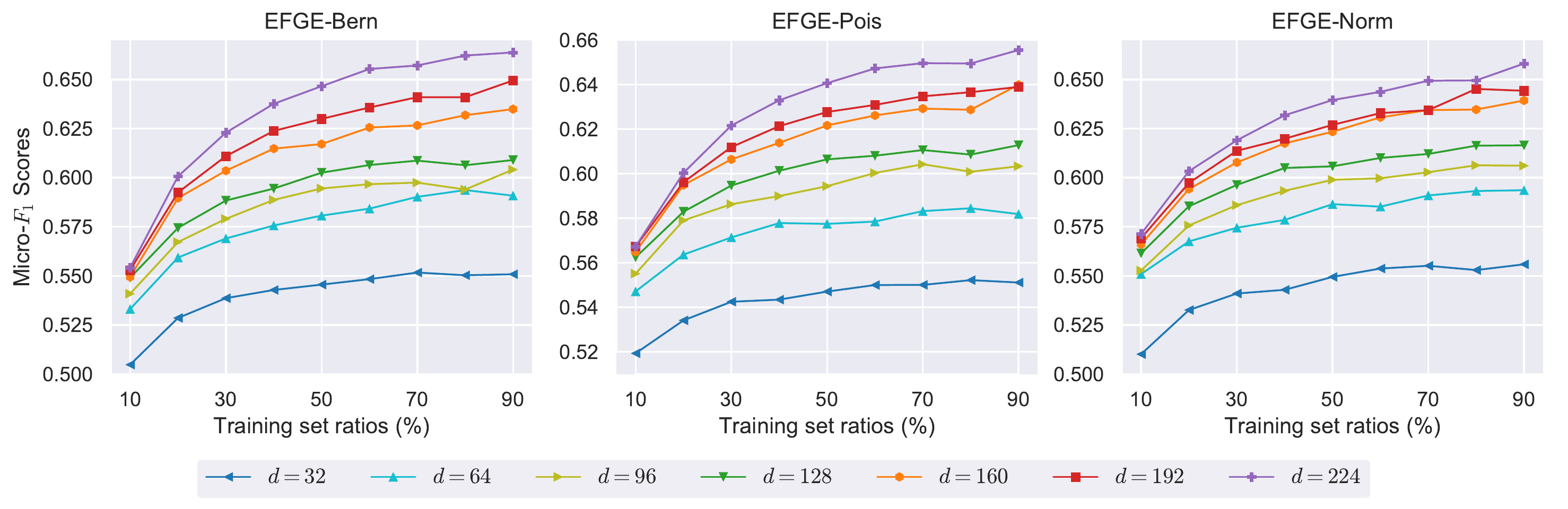}
  \caption{Influence of dimension size over \textsl{CiteSeer} network.}
  \label{fig:dimension_appendix}
\end{figure*}

\begin{figure*}[h]
  \centering
  \includegraphics[width=.745\linewidth]{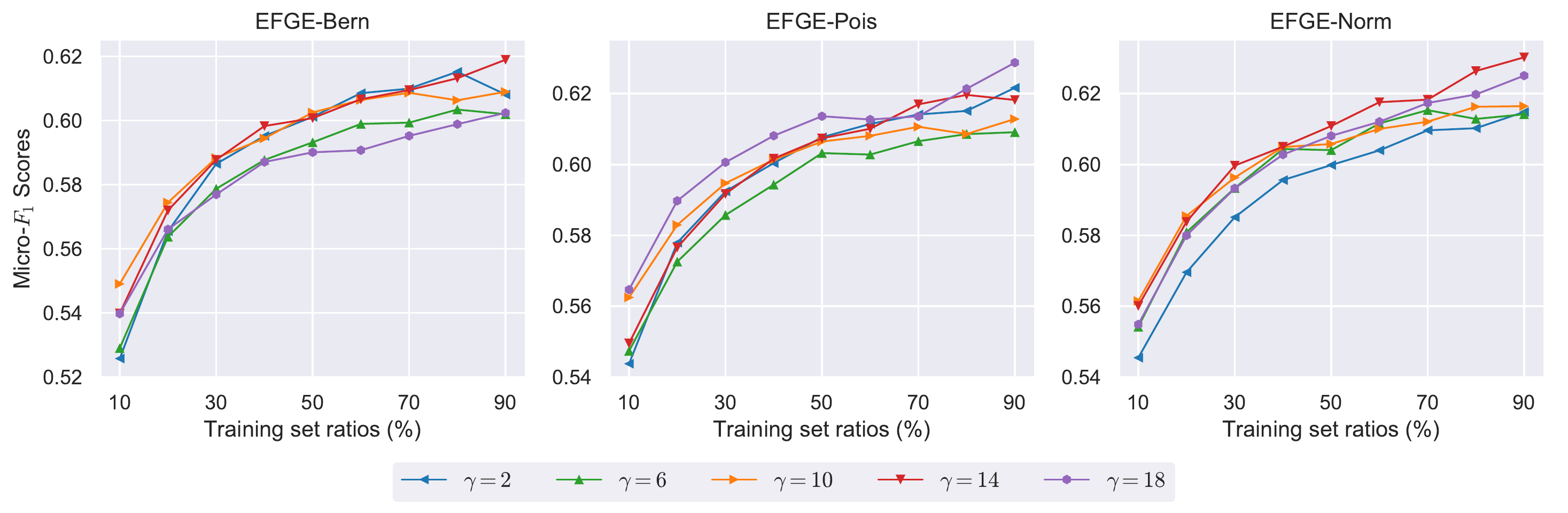}
  \caption{Influence of window size $\gamma$ for the \textsl{CiteSeer} network.}
  \label{fig:window_size_appendix}
\end{figure*}


\section{Dataset Description}
Here we provide a detailed description of the graph datasets\footnote{The datasets can be found at: \url{https://snap.stanford.edu/data} and \url{https://github.com/GTmac/HARP}} used in our study.

\begin{itemize}
	\item \textsl{CiteSeer} 
	is a citation network obtained from the \textit{CiteSeer} library, in which each node corresponds to a paper and the edges indicate  reference relationships among papers. The labels represent the subjects of the paper. 
	
	\item \textsl{Cora} 
	is another citation network constructed from the publications in the machine learning area; the documents are classified into seven categories. 
	
	\item \textsl{DBLP} 
	is a co-authorship graph, where an edge exists between nodes if two authors have co-authored at least one paper. The labels represent the research areas. 
	
	\item \textsl{AstroPh} 
	is another collaboration network built from the papers submitted to the  \textit{ArXiv} repository for the Astro Physics subject area, from January 1993 to April 2003.
	
	\item \textsl{HepTh} 
	network is constructed in a similar way from the papers submitted to \textit{ArXiv} for the \textit{High Energy Physics - Theory} category.
	
	\item \textsl{GrQc} 
	is our last collaboration network which has been constructed from the e-prints submitted to the category of \textit{General Relativity and Quantum Cosmology}.
	
	\item \textsl{Facebook} 
	is a social network extracted from a survey conducted via a \textit{Facebook} application.

\end{itemize}

\begin{figure}[h]
  \centering
  \includegraphics[width=.55\linewidth]{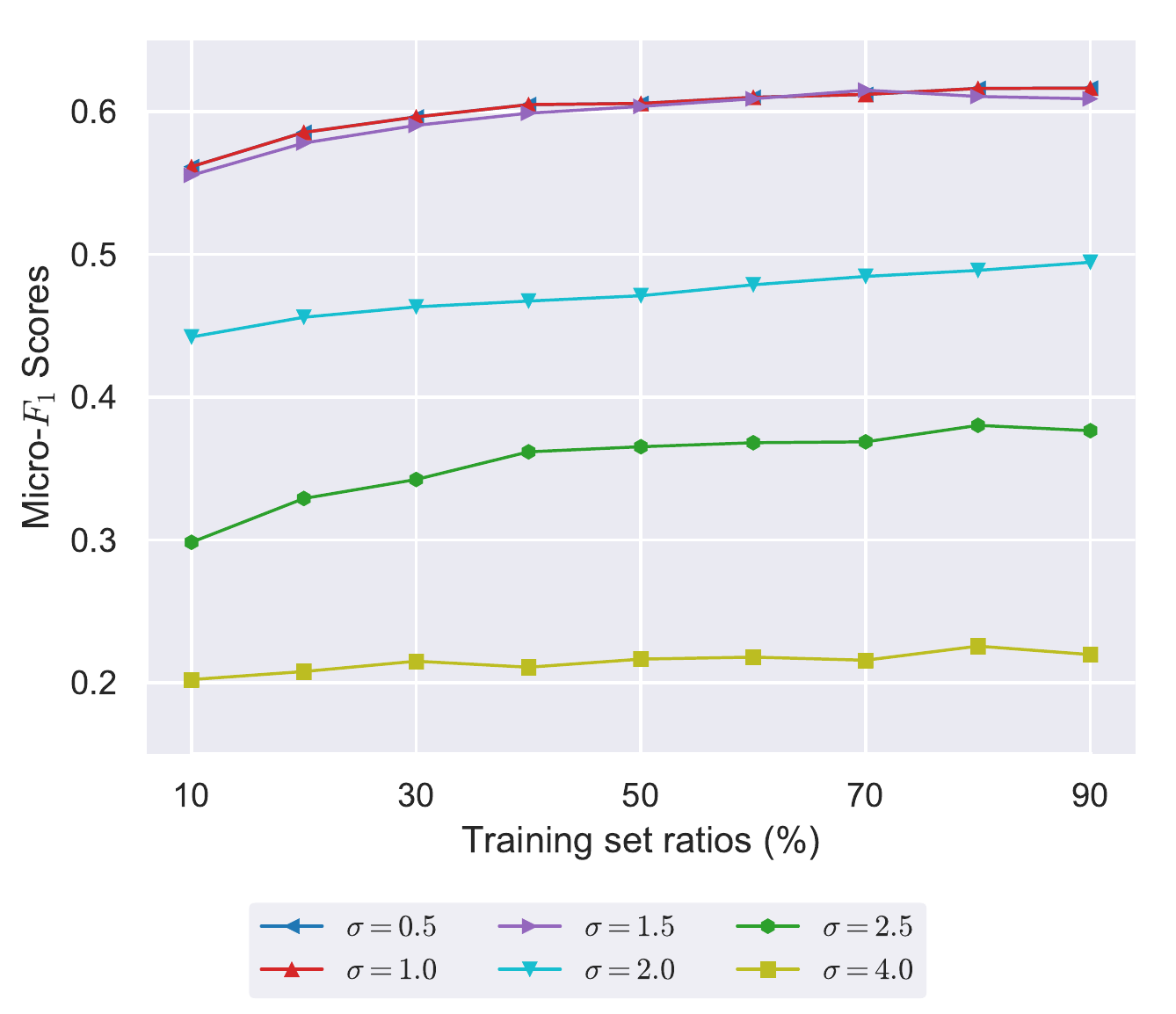}
  \caption{Effect of standard deviation for \textsc{EFGE-Norm}.}
  \label{fig:sigma_effect_appendix}
\end{figure}

\balance

\section{Complementary Experimental Results for Node Classification}

In this section we present complementary experimental results for node classification by reporting the Micro-$F_1$ and Macro-$F_1$ scores for  training size ratios varying from $1\%$ up to $90\%$. The scores are presented in Tables \ref{tab:classification_citeseer_appendix}, \ref{tab:classification_cora_appendix} and \ref{tab:classification_dblp_appendix}.

\section{Parameter Sensitivity}
In this section, we perform some further sensitivity analysis experiments. In particular, we present complementary experiments where we examine the effect of embedding dimension $d$ and the effect of the window size $\gamma$ used to sample context nodes for the different models over different training set ratios. The  results are depicted in Figures \ref{fig:dimension_appendix} and \ref{fig:window_size_appendix}.

\subsubsection*{Effect of standard deviation of \textsc{EFGE-Norm} model}

The \textsc{EFGE-Norm} model has an extra parameter $\sigma$ which can influence the performance of the method. To examine the impact of $\sigma$, we have chosen six different values, performing  experiments over \textsl{CiteSeer} network. Figure \ref{fig:sigma_effect_appendix} depicts how the Micro-$F_1$ scores change for various training set ratios. The results clearly indicate that the model performs well for small values of $\sigma$  --- with the best results obtained for $\sigma=1$. For this reason, we have set this value for all the experiments conducted in the node classification and link prediction tasks.

\begin{table*}[h]
\centering
\resizebox{\textwidth}{!}{%
\begin{tabular}{c|r|cccccccccccccccccc}
 \multicolumn{1}{l}{} &  & \textbf{1\%} & \textbf{2\%} & \textbf{3\%} & \textbf{4\%} & \textbf{5\%} & \textbf{6\%} & \textbf{7\%} & \textbf{8\%} & \textbf{9\%} & \textbf{10\%} & \textbf{20\%} & \textbf{30\%} & \textbf{40\%} & \textbf{50\%} & \textbf{60\%} & \textbf{70\%} & \textbf{80\%} & \textbf{90\%} \\\midrule
\multirow{10}{*}{\rotatebox{90}{\textsc{Baselines}}} &
\multirow{2}{*}{\textsc{DeepWalk}} & 0.373 & 0.416 & 0.443 & 0.460 & 0.471 & 0.489 & 0.496 & 0.505 & 0.506 & 0.517 & 0.548 & 0.566 & 0.576 & 0.584 & 0.590 & 0.595 & 0.591 & 0.592 \\
 &  & \multicolumn{1}{c}{0.318} & \multicolumn{1}{c}{0.367} & \multicolumn{1}{c}{0.403} & \multicolumn{1}{c}{0.418} & \multicolumn{1}{c}{0.429} & \multicolumn{1}{c}{0.446} & \multicolumn{1}{c}{0.454} & \multicolumn{1}{c}{0.463} & \multicolumn{1}{c}{0.464} & \multicolumn{1}{c}{0.474} & \multicolumn{1}{c}{0.505} & \multicolumn{1}{c}{0.521} & \multicolumn{1}{c}{0.529} & \multicolumn{1}{c}{0.537} & \multicolumn{1}{c}{0.542} & \multicolumn{1}{c}{0.547} & \multicolumn{1}{c}{0.541} & \multicolumn{1}{c}{0.543} \\
 & \multirow{2}{*}{\textsc{Node2Vec}} & \cellcolor[HTML]{F2F3F4}0.405 & \cellcolor[HTML]{F2F3F4}\cellcolor[HTML]{F2F3F4}0.450 & \cellcolor[HTML]{F2F3F4}0.475 & \cellcolor[HTML]{F2F3F4}0.491 & \cellcolor[HTML]{F2F3F4}0.500 & \cellcolor[HTML]{F2F3F4}0.517 & \cellcolor[HTML]{F2F3F4}0.524 & \cellcolor[HTML]{F2F3F4}0.530 & \cellcolor[HTML]{F2F3F4}0.537 & \cellcolor[HTML]{F2F3F4}0.541 & \cellcolor[HTML]{F2F3F4}0.570 & \cellcolor[HTML]{F2F3F4}0.585 & \cellcolor[HTML]{F2F3F4}0.590 & \cellcolor[HTML]{F2F3F4}0.597 & \cellcolor[HTML]{F2F3F4}0.598 & \cellcolor[HTML]{F2F3F4}0.601 & \cellcolor[HTML]{F2F3F4}0.596 & \cellcolor[HTML]{F2F3F4}0.599 \\
 &  & \multicolumn{1}{c}{\cellcolor[HTML]{F2F3F4}0.339} & \multicolumn{1}{c}{\cellcolor[HTML]{F2F3F4}0.396} & \multicolumn{1}{c}{\cellcolor[HTML]{F2F3F4}0.424} & \multicolumn{1}{c}{\cellcolor[HTML]{F2F3F4}0.441} & \multicolumn{1}{c}{\cellcolor[HTML]{F2F3F4}0.452} & \multicolumn{1}{c}{\cellcolor[HTML]{F2F3F4}0.470} & \multicolumn{1}{c}{\cellcolor[HTML]{F2F3F4}0.478} & \multicolumn{1}{c}{\cellcolor[HTML]{F2F3F4}0.483} & \multicolumn{1}{c}{\cellcolor[HTML]{F2F3F4}0.491} & \multicolumn{1}{c}{\cellcolor[HTML]{F2F3F4}0.494} & \multicolumn{1}{c}{\cellcolor[HTML]{F2F3F4}0.525} & \multicolumn{1}{c}{\cellcolor[HTML]{F2F3F4}0.537} & \multicolumn{1}{c}{\cellcolor[HTML]{F2F3F4}0.542} & \multicolumn{1}{c}{\cellcolor[HTML]{F2F3F4}0.549} & \multicolumn{1}{c}{\cellcolor[HTML]{F2F3F4}0.550} & \multicolumn{1}{c}{\cellcolor[HTML]{F2F3F4}0.553} & \multicolumn{1}{c}{\cellcolor[HTML]{F2F3F4}0.548} & \multicolumn{1}{c}{\cellcolor[HTML]{F2F3F4}0.551} \\
 & \multirow{2}{*}{\textsc{LINE}} & 0.273 & 0.323 & 0.362 & 0.387 & 0.406 & 0.423 & 0.440 & 0.451 & 0.456 & 0.466 & 0.513 & 0.532 & 0.543 & 0.551 & 0.556 & 0.560 & 0.568 & 0.564 \\
 &  & 0.204 & 0.268 & 0.311 & 0.338 & 0.360 & 0.372 & 0.390 & 0.399 & 0.407 & 0.414 & 0.459 & 0.480 & 0.492 & 0.498 & 0.505 & 0.505 & 0.514 & 0.513 \\
 & \multirow{2}{*}{\textsc{HOPE}} &\cellcolor[HTML]{F2F3F4} 0.194 &\cellcolor[HTML]{F2F3F4} 0.196 & \cellcolor[HTML]{F2F3F4}0.202 & \cellcolor[HTML]{F2F3F4}0.205 & \cellcolor[HTML]{F2F3F4}0.208 & \cellcolor[HTML]{F2F3F4}0.210 & \cellcolor[HTML]{F2F3F4}0.216 & \cellcolor[HTML]{F2F3F4}0.204 & \cellcolor[HTML]{F2F3F4}0.216 & \cellcolor[HTML]{F2F3F4}0.219 & \cellcolor[HTML]{F2F3F4}0.228 & \cellcolor[HTML]{F2F3F4}0.256 & \cellcolor[HTML]{F2F3F4}0.267 &\cellcolor[HTML]{F2F3F4} 0.277 & \cellcolor[HTML]{F2F3F4}0.293 & \cellcolor[HTML]{F2F3F4}0.299 &\cellcolor[HTML]{F2F3F4} 0.300 & \cellcolor[HTML]{F2F3F4}0.320 \\
 &  & \multicolumn{1}{c}{\cellcolor[HTML]{F2F3F4}0.060} & \multicolumn{1}{c}{\cellcolor[HTML]{F2F3F4}0.060} & \multicolumn{1}{c}{\cellcolor[HTML]{F2F3F4}0.063} & \multicolumn{1}{c}{\cellcolor[HTML]{F2F3F4}0.062} & \multicolumn{1}{c}{\cellcolor[HTML]{F2F3F4}0.066} & \multicolumn{1}{c}{\cellcolor[HTML]{F2F3F4}0.068} & \multicolumn{1}{c}{\cellcolor[HTML]{F2F3F4}0.079} & \multicolumn{1}{c}{\cellcolor[HTML]{F2F3F4}0.064} & \multicolumn{1}{c}{\cellcolor[HTML]{F2F3F4}0.075} & \multicolumn{1}{c}{\cellcolor[HTML]{F2F3F4}0.078} & \multicolumn{1}{c}{\cellcolor[HTML]{F2F3F4}0.094} & \multicolumn{1}{c}{\cellcolor[HTML]{F2F3F4}0.127} & \multicolumn{1}{c}{\cellcolor[HTML]{F2F3F4}0.136} & \multicolumn{1}{c}{\cellcolor[HTML]{F2F3F4}0.150} & \multicolumn{1}{c}{\cellcolor[HTML]{F2F3F4}0.168} & \multicolumn{1}{c}{\cellcolor[HTML]{F2F3F4}0.178} & \multicolumn{1}{c}{\cellcolor[HTML]{F2F3F4}0.183} & \multicolumn{1}{c}{\cellcolor[HTML]{F2F3F4}0.205}\\
 & \multirow{2}{*}{\textsc{NetMF}} & 0.379 & 0.451 & 0.472 & 0.496 & 0.515 & 0.526 & 0.533 & 0.540 & 0.544 & 0.552 & 0.578 & 0.590 & 0.596 & 0.603 & 0.605 & 0.604 & 0.611 & 0.608 \\
 & & 0.315 & 0.400 & 0.423 & 0.445 & 0.464 & 0.477 & 0.486 & 0.490 & 0.497 & 0.503 & 0.529 & 0.542 & 0.546 & 0.553 & 0.554 & 0.552 & 0.560 & 0.554
 \\\midrule
\multirow{5}{*}{\rotatebox{90}{\textsc{EFGE}}} & \multirow{2}{*}{\textsc{EFGE-Bern}} & 0.411 & 0.461 & 0.487 & 0.493 & 0.513 & 0.517 & 0.528 & 0.536 & 0.543 & 0.549 & 0.574 & 0.588 & 0.594 & 0.603 & 0.606 & 0.609 & 0.606 & 0.609 \\
 &  & \multicolumn{1}{c}{0.345} & \multicolumn{1}{c}{0.410} & \multicolumn{1}{c}{0.436} & \multicolumn{1}{c}{0.446} & \multicolumn{1}{c}{0.462} & \multicolumn{1}{c}{0.468} & \multicolumn{1}{c}{0.481} & \multicolumn{1}{c}{0.489} & \multicolumn{1}{c}{0.496} & \multicolumn{1}{c}{0.502} & \multicolumn{1}{c}{0.530} & \multicolumn{1}{c}{0.545} & \multicolumn{1}{c}{0.549} & \multicolumn{1}{c}{0.558} & \multicolumn{1}{c}{0.563} & \multicolumn{1}{c}{0.564} & \multicolumn{1}{c}{0.561} & \multicolumn{1}{c}{0.563} \\
 & \multirow{2}{*}{\textsc{EFGE-Pois}} & \cellcolor[HTML]{F2F3F4}\textbf{0.449} & \cellcolor[HTML]{F2F3F4}0.484 &\cellcolor[HTML]{F2F3F4} 0.500 &\cellcolor[HTML]{F2F3F4} 0.514 & \cellcolor[HTML]{F2F3F4}0.530 &\cellcolor[HTML]{F2F3F4} 0.537 & \cellcolor[HTML]{F2F3F4}0.544 & \cellcolor[HTML]{F2F3F4}0.551 &\cellcolor[HTML]{F2F3F4} 0.555 & \cellcolor[HTML]{F2F3F4}\textbf{0.562} &\cellcolor[HTML]{F2F3F4} 0.583 & \cellcolor[HTML]{F2F3F4}0.595 & \cellcolor[HTML]{F2F3F4}0.601 & \cellcolor[HTML]{F2F3F4}\textbf{0.606} & \cellcolor[HTML]{F2F3F4}0.608 & \cellcolor[HTML]{F2F3F4}0.611 & \cellcolor[HTML]{F2F3F4}0.609 & \cellcolor[HTML]{F2F3F4}0.613 \\
 &  & \multicolumn{1}{c}{\cellcolor[HTML]{F2F3F4}\textbf{0.384}} & \multicolumn{1}{c}{\cellcolor[HTML]{F2F3F4}0.425} & \multicolumn{1}{c}{\cellcolor[HTML]{F2F3F4}0.445} & \multicolumn{1}{c}{\cellcolor[HTML]{F2F3F4}0.461} & \multicolumn{1}{c}{\cellcolor[HTML]{F2F3F4}\textbf{0.477}} & \multicolumn{1}{c}{\cellcolor[HTML]{F2F3F4}0.482} & \multicolumn{1}{c}{\cellcolor[HTML]{F2F3F4}0.491} & \multicolumn{1}{c}{\cellcolor[HTML]{F2F3F4}\textbf{0.500}} & \multicolumn{1}{c}{\cellcolor[HTML]{F2F3F4}\textbf{0.504}} & \multicolumn{1}{c}{\cellcolor[HTML]{F2F3F4}\textbf{0.512}} & \multicolumn{1}{c}{\cellcolor[HTML]{F2F3F4}0.533} & \multicolumn{1}{c}{\cellcolor[HTML]{F2F3F4}0.547} & \multicolumn{1}{c}{\cellcolor[HTML]{F2F3F4}0.552} & \multicolumn{1}{c}{\cellcolor[HTML]{F2F3F4}0.558} & \multicolumn{1}{c}{\cellcolor[HTML]{F2F3F4}0.559} & \multicolumn{1}{c}{\cellcolor[HTML]{F2F3F4}0.563} & \multicolumn{1}{c}{\cellcolor[HTML]{F2F3F4}0.560} & \multicolumn{1}{c}{\cellcolor[HTML]{F2F3F4}0.564} \\
 & \multirow{2}{*}{\textsc{EFGE-Norm}} & \multicolumn{1}{c}{0.434} & \multicolumn{1}{c}{\textbf{0.493}} & \multicolumn{1}{c}{\textbf{0.510}} & \multicolumn{1}{c}{\textbf{0.525}} & \multicolumn{1}{c}{\textbf{0.531}} & \multicolumn{1}{c}{\textbf{0.542}} & \multicolumn{1}{c}{\textbf{0.546}} & \multicolumn{1}{c}{\textbf{0.553}} & \multicolumn{1}{c}{\textbf{0.557}} & \multicolumn{1}{c}{0.561} & \multicolumn{1}{c}{\textbf{0.585}} & \multicolumn{1}{c}{\textbf{0.596}} & \multicolumn{1}{c}{\textbf{0.605}} & \multicolumn{1}{c}{\textbf{0.606}} & \multicolumn{1}{c}{\textbf{0.610}} & \multicolumn{1}{c}{\textbf{0.612}} & \multicolumn{1}{c}{\textbf{0.616}} & \multicolumn{1}{c}{\textbf{0.616}} \\
 & & 0.361 & \textbf{0.431} & \textbf{0.450} & \textbf{0.467} & 0.474 & \textbf{0.485} & \textbf{0.492} & \textbf{0.500} & 0.501 & 0.509 & \textbf{0.534} & \textbf{0.547} & \textbf{0.558} & \textbf{0.560} & \textbf{0.566} & \textbf{0.567} & \textbf{0.572} & \textbf{0.575}\\\bottomrule
\end{tabular}%
}
\caption{Node classification for varying training sizes for the \textsl{CiteSeer} network. For each method, the first row indicates the Micro-$F_1$ scores and the second one shows the Macro-$F_1$ scores.}
\label{tab:classification_citeseer_appendix}
\end{table*}

\begin{table*}[h]
\centering
\resizebox{\textwidth}{!}{%
\begin{tabular}{c|r|cccccccccccccccccc}
\multicolumn{1}{c}{} &  & \textbf{1\%} & \textbf{2\%} & \textbf{3\%} & \textbf{4\%} & \textbf{5\%} & \textbf{6\%} & \textbf{7\%} & \textbf{8\%} & \textbf{9\%} & \textbf{10\%} & \textbf{20\%} & \textbf{30\%} & \textbf{40\%} & \textbf{50\%} & \textbf{60\%} & \textbf{70\%} & \textbf{80\%} & \textbf{90\%} \\\midrule
\multirow{10}{*}{\rotatebox{90}{\textsc{Baselines}}} & \multirow{2}{*}{\textsc{Deepwalk}} & 0.531 & 0.621 & 0.667 & 0.689 & 0.703 & 0.715 & 0.727 & 0.732 & 0.744 & 0.747 & 0.784 & 0.802 & 0.810 & 0.819 & 0.822 & 0.826 & 0.826 & 0.833 \\
 &  & \multicolumn{1}{c}{0.455} & \multicolumn{1}{c}{0.571} & \multicolumn{1}{c}{0.638} & \multicolumn{1}{c}{0.666} & \multicolumn{1}{c}{0.681} & \multicolumn{1}{c}{0.698} & \multicolumn{1}{c}{0.711} & \multicolumn{1}{c}{0.717} & \multicolumn{1}{c}{0.730} & \multicolumn{1}{c}{0.734} & \multicolumn{1}{c}{0.774} & \multicolumn{1}{c}{0.792} & \multicolumn{1}{c}{0.800} & \multicolumn{1}{c}{0.809} & \multicolumn{1}{c}{0.813} & \multicolumn{1}{c}{0.816} & \multicolumn{1}{c}{0.815} & \multicolumn{1}{c}{0.825} \\
 & \multirow{2}{*}{\textsc{Node2Vec}} & \cellcolor[HTML]{F2F3F4}0.575 &\cellcolor[HTML]{F2F3F4}0.656 & \cellcolor[HTML]{F2F3F4}0.696 &\cellcolor[HTML]{F2F3F4}0.714 & \cellcolor[HTML]{F2F3F4}0.731 &\cellcolor[HTML]{F2F3F4}0.743 & \cellcolor[HTML]{F2F3F4}0.746 & \cellcolor[HTML]{F2F3F4}0.757 &\cellcolor[HTML]{F2F3F4}0.764 &\cellcolor[HTML]{F2F3F4}0.769 &\cellcolor[HTML]{F2F3F4}0.799 & \cellcolor[HTML]{F2F3F4}0.815 & \cellcolor[HTML]{F2F3F4}0.824 & \cellcolor[HTML]{F2F3F4}0.831 & \cellcolor[HTML]{F2F3F4}0.835 & \cellcolor[HTML]{F2F3F4}0.839 & \cellcolor[HTML]{F2F3F4}\textbf{0.842} & \cellcolor[HTML]{F2F3F4}0.841 \\
 &  & \multicolumn{1}{c}{\cellcolor[HTML]{F2F3F4}0.501} & \multicolumn{1}{c}{\cellcolor[HTML]{F2F3F4}0.605} & \multicolumn{1}{c}{\cellcolor[HTML]{F2F3F4}0.665} & \multicolumn{1}{c}{\cellcolor[HTML]{F2F3F4}0.687} & \multicolumn{1}{c}{\cellcolor[HTML]{F2F3F4}0.713} & \multicolumn{1}{c}{\cellcolor[HTML]{F2F3F4}0.723} & \multicolumn{1}{c}{\cellcolor[HTML]{F2F3F4}0.730} & \multicolumn{1}{c}{\cellcolor[HTML]{F2F3F4}0.741} & \multicolumn{1}{c}{\cellcolor[HTML]{F2F3F4}0.750} & \multicolumn{1}{c}{\cellcolor[HTML]{F2F3F4}0.755} & \multicolumn{1}{c}{\cellcolor[HTML]{F2F3F4}0.786} & \multicolumn{1}{c}{\cellcolor[HTML]{F2F3F4}0.803} & \multicolumn{1}{c}{\cellcolor[HTML]{F2F3F4}0.812} & \multicolumn{1}{c}{\cellcolor[HTML]{F2F3F4}0.819} & \multicolumn{1}{c}{\cellcolor[HTML]{F2F3F4}0.823} & \multicolumn{1}{c}{\cellcolor[HTML]{F2F3F4}0.826} & \multicolumn{1}{c}{\cellcolor[HTML]{F2F3F4}0.830} & \multicolumn{1}{c}{\cellcolor[HTML]{F2F3F4}0.825} \\
 & \multirow{2}{*}{\textsc{LINE}} & 0.384 & 0.450 & 0.506 & 0.544 & 0.568 & 0.590 & 0.618 & 0.633 & 0.648 & 0.661 & 0.723 & 0.746 & 0.758 & 0.765 & 0.770 & 0.774 & 0.775 & 0.775 \\
 &  & 0.272 & 0.364 & 0.435 & 0.491 & 0.523 & 0.555 & 0.588 & 0.607 & 0.626 & 0.642 & 0.713 & 0.736 & 0.747 & 0.755 & 0.759 & 0.762 & 0.766 & 0.764 \\
 & \multirow{2}{*}{\textsc{HOPE}} & \cellcolor[HTML]{F2F3F4}0.260 & \cellcolor[HTML]{F2F3F4}0.277 &\cellcolor[HTML]{F2F3F4} 0.297 & \cellcolor[HTML]{F2F3F4}0.302 & \cellcolor[HTML]{F2F3F4}0.304 & \cellcolor[HTML]{F2F3F4}0.299 & \cellcolor[HTML]{F2F3F4}0.302 & \cellcolor[HTML]{F2F3F4}0.302 & \cellcolor[HTML]{F2F3F4}\cellcolor[HTML]{F2F3F4}0.302 & \cellcolor[HTML]{F2F3F4}0.302 & \cellcolor[HTML]{F2F3F4}0.303 & \cellcolor[HTML]{F2F3F4}0.301 &\cellcolor[HTML]{F2F3F4}0.303 & \cellcolor[HTML]{F2F3F4}0.302 & \cellcolor[HTML]{F2F3F4}0.303 & \cellcolor[HTML]{F2F3F4}0.303 & \cellcolor[HTML]{F2F3F4}0.303 & \cellcolor[HTML]{F2F3F4}0.302 \\
 &  & \multicolumn{1}{c}{\cellcolor[HTML]{F2F3F4}0.065} & \multicolumn{1}{c}{\cellcolor[HTML]{F2F3F4}0.068} & \multicolumn{1}{c}{\cellcolor[HTML]{F2F3F4}0.067} & \multicolumn{1}{c}{\cellcolor[HTML]{F2F3F4}0.066} & \multicolumn{1}{c}{\cellcolor[HTML]{F2F3F4}0.068} & \multicolumn{1}{c}{\cellcolor[HTML]{F2F3F4}0.066} & \multicolumn{1}{c}{\cellcolor[HTML]{F2F3F4}0.066} & \multicolumn{1}{c}{\cellcolor[HTML]{F2F3F4}0.066} & \multicolumn{1}{c}{\cellcolor[HTML]{F2F3F4}0.066} & \multicolumn{1}{c}{\cellcolor[HTML]{F2F3F4}0.066} & \multicolumn{1}{c}{\cellcolor[HTML]{F2F3F4}0.067} & \multicolumn{1}{c}{\cellcolor[HTML]{F2F3F4}0.067} & \multicolumn{1}{c}{\cellcolor[HTML]{F2F3F4}0.067} & \multicolumn{1}{c}{\cellcolor[HTML]{F2F3F4}0.067} & \multicolumn{1}{c}{\cellcolor[HTML]{F2F3F4}0.067} & \multicolumn{1}{c}{\cellcolor[HTML]{F2F3F4}0.068} & \multicolumn{1}{c}{\cellcolor[HTML]{F2F3F4}0.070} & \multicolumn{1}{c}{\cellcolor[HTML]{F2F3F4}0.072}\\
 & \multirow{2}{*}{\textsc{NetMF}} & 0.534 & 0.636 & 0.693 & 0.716 & 0.735 & 0.748 & 0.757 & 0.767 & 0.770 & 0.773 & \textbf{0.807} & \textbf{0.821} & \textbf{0.828} & \textbf{0.834} & \textbf{0.839} & \textbf{0.841} & 0.839 & \textbf{0.844} \\
 & & 0.461 & 0.591 & 0.667 & 0.694 & 0.717 & 0.731 & 0.741 & 0.751 & 0.757 & 0.760 & \textbf{0.797} & \textbf{0.811} & \textbf{0.819} & \textbf{0.824} & \textbf{0.830} & \textbf{0.832} & \textbf{0.831} & \textbf{0.835}
 \\\midrule
\multirow{6}{*}{\rotatebox{90}{\textsc{EFGE}}} & \multirow{2}{*}{\textsc{EFGE-Bern}} & 0.593 & 0.668 & 0.703 & 0.720 & 0.738 & 0.743 & 0.751 & 0.759 & 0.760 & 0.767 & 0.792 & 0.808 & 0.815 & 0.823 & 0.828 & 0.834 & 0.837 & 0.838 \\
 &  & \multicolumn{1}{c}{0.507} & \multicolumn{1}{c}{0.622} & \multicolumn{1}{c}{0.672} & \multicolumn{1}{c}{0.695} & \multicolumn{1}{c}{0.718} & \multicolumn{1}{c}{0.723} & \multicolumn{1}{c}{0.735} & \multicolumn{1}{c}{0.744} & \multicolumn{1}{c}{0.743} & \multicolumn{1}{c}{0.754} & \multicolumn{1}{c}{0.780} & \multicolumn{1}{c}{0.798} & \multicolumn{1}{c}{0.806} & \multicolumn{1}{c}{0.814} & \multicolumn{1}{c}{0.819} & \multicolumn{1}{c}{0.823} & \multicolumn{1}{c}{0.825} & \multicolumn{1}{c}{0.824} \\
 & \multirow{2}{*}{\textsc{EFGE-Pois}} & \cellcolor[HTML]{F2F3F4}\textbf{0.605} & \cellcolor[HTML]{F2F3F4}0.680 & \cellcolor[HTML]{F2F3F4}0.714 & \cellcolor[HTML]{F2F3F4} 0.733 & \cellcolor[HTML]{F2F3F4}0.739 & \cellcolor[HTML]{F2F3F4}0.746 & \cellcolor[HTML]{F2F3F4}0.752 & \cellcolor[HTML]{F2F3F4}0.759 & \cellcolor[HTML]{F2F3F4}0.761 & \cellcolor[HTML]{F2F3F4}0.765 & \cellcolor[HTML]{F2F3F4}0.791 & \cellcolor[HTML]{F2F3F4}0.802 & \cellcolor[HTML]{F2F3F4}0.809 & \cellcolor[HTML]{F2F3F4}0.814 & \cellcolor[HTML]{F2F3F4} 0.817 & \cellcolor[HTML]{F2F3F4}0.820 & \cellcolor[HTML]{F2F3F4}0.824 & \cellcolor[HTML]{F2F3F4}0.825 \\
 &  & \multicolumn{1}{c}{\cellcolor[HTML]{F2F3F4}\textbf{0.512}} & \multicolumn{1}{c}{\cellcolor[HTML]{F2F3F4}\textbf{0.630}} & \multicolumn{1}{c}{\cellcolor[HTML]{F2F3F4}\textbf{0.685}} & \multicolumn{1}{c}{\cellcolor[HTML]{F2F3F4}0.709} & \multicolumn{1}{c}{\cellcolor[HTML]{F2F3F4}0.715} & \multicolumn{1}{c}{\cellcolor[HTML]{F2F3F4}0.731} & \multicolumn{1}{c}{\cellcolor[HTML]{F2F3F4}0.737} & \multicolumn{1}{c}{\cellcolor[HTML]{F2F3F4}0.744} & \multicolumn{1}{c}{\cellcolor[HTML]{F2F3F4}0.748} & \multicolumn{1}{c}{\cellcolor[HTML]{F2F3F4}0.752} & \multicolumn{1}{c}{\cellcolor[HTML]{F2F3F4}0.780} & \multicolumn{1}{c}{\cellcolor[HTML]{F2F3F4}0.792} & \multicolumn{1}{c}{\cellcolor[HTML]{F2F3F4}0.798} & \multicolumn{1}{c}{\cellcolor[HTML]{F2F3F4}0.803} & \multicolumn{1}{c}{\cellcolor[HTML]{F2F3F4}0.807} & \multicolumn{1}{c}{\cellcolor[HTML]{F2F3F4}0.810} & \multicolumn{1}{c}{\cellcolor[HTML]{F2F3F4}0.815} & \multicolumn{1}{c}{\cellcolor[HTML]{F2F3F4}0.813} \\
 & \multirow{2}{*}{\textsc{EFGE-Norm}} & 0.601 & \multicolumn{1}{c}{\textbf{0.682}} & \multicolumn{1}{c}{\textbf{0.720}} & \multicolumn{1}{c}{\textbf{0.743}} & \multicolumn{1}{c}{\textbf{0.754}} & \multicolumn{1}{c}{\textbf{0.760}} & \multicolumn{1}{c}{\textbf{0.765}} & \multicolumn{1}{c}{\textbf{0.770}} & \multicolumn{1}{c}{\textbf{0.776}} & \multicolumn{1}{c}{\textbf{0.780}} & \multicolumn{1}{c}{0.799} & \multicolumn{1}{c}{0.810} & \multicolumn{1}{c}{0.814} & \multicolumn{1}{c}{0.824} & \multicolumn{1}{c}{0.824} & \multicolumn{1}{c}{0.827} & \multicolumn{1}{c}{0.832} & \multicolumn{1}{c}{0.839} \\
 & & \textbf{0.512} & 0.626 & \textbf{0.685} & \textbf{0.711} & \textbf{0.730} & \textbf{0.741} & \textbf{0.746} & \textbf{0.754} & \textbf{0.760} & \textbf{0.766} & 0.785 & 0.796 & 0.800 & 0.810 & 0.810 & 0.812 & 0.818 & 0.824\\\bottomrule
\end{tabular}%
}
\caption{Node classification for varying training sizes for the \textsl{Cora} network. For each method, the first row indicates the Micro-$F_1$ scores and the second one shows the Macro-$F_1$ scores.}
\label{tab:classification_cora_appendix}
\end{table*}

\begin{table*}[h]
\centering
\resizebox{\textwidth}{!}{%
\begin{tabular}{c|r|cccccccccccccccccc}
 &  & \textbf{1\%} & \textbf{2\%} & \textbf{3\%} & \textbf{4\%} & \textbf{5\%} & \textbf{6\%} & \textbf{7\%} & \textbf{8\%} & \textbf{9\%} & \textbf{10\%} & \textbf{20\%} & \textbf{30\%} & \textbf{40\%} & \textbf{50\%} & \textbf{60\%} & \textbf{70\%} & \textbf{80\%} & \textbf{90\%} \\\midrule
\multirow{10}{*}{\rotatebox{90}{\textsc{Baselines}}} & \multirow{2}{*}{\textsc{Deepwalk}} & 0.517 & 0.545 & 0.569 & 0.585 & 0.593 & 0.600 & 0.604 & 0.608 & 0.610 & 0.613 & 0.622 & 0.626 & 0.627 & 0.628 & 0.627 & 0.628 & 0.629 & 0.633 \\
 &  & 0.462 & 0.492 & 0.510 & 0.522 & 0.529 & 0.535 & 0.537 & 0.542 & 0.543 & 0.545 & 0.552 & 0.555 & 0.556 & 0.556 & 0.555 & 0.556 & 0.557 & 0.559 \\
 & \multirow{2}{*}{\textsc{Node2Vec}} &\cellcolor[HTML]{F2F3F4}0.557 &\cellcolor[HTML]{F2F3F4}0.575 & \cellcolor[HTML]{F2F3F4}0.590 & \cellcolor[HTML]{F2F3F4}0.600 & \cellcolor[HTML]{F2F3F4}0.605 & \cellcolor[HTML]{F2F3F4}0.611 & \cellcolor[HTML]{F2F3F4}0.615 & \cellcolor[HTML]{F2F3F4}0.619 & \cellcolor[HTML]{F2F3F4}0.621 & \cellcolor[HTML]{F2F3F4}0.622 & \cellcolor[HTML]{F2F3F4}0.632 & \cellcolor[HTML]{F2F3F4}0.636 & \cellcolor[HTML]{F2F3F4}0.638 &
 \cellcolor[HTML]{F2F3F4}0.638 &
 \cellcolor[HTML]{F2F3F4}0.640 &
 \cellcolor[HTML]{F2F3F4}0.639 & \cellcolor[HTML]{F2F3F4}0.640 & \cellcolor[HTML]{F2F3F4}0.639 \\
 &  & \cellcolor[HTML]{F2F3F4}0.497 & \cellcolor[HTML]{F2F3F4}0.517 &\cellcolor[HTML]{F2F3F4} 0.532 & \cellcolor[HTML]{F2F3F4}0.541 & \cellcolor[HTML]{F2F3F4}0.545 & \cellcolor[HTML]{F2F3F4}0.552 & \cellcolor[HTML]{F2F3F4}0.554 & \cellcolor[HTML]{F2F3F4}\textbf{0.558} & \cellcolor[HTML]{F2F3F4}0.558 & \cellcolor[HTML]{F2F3F4}0.560 & \cellcolor[HTML]{F2F3F4}\textbf{0.569} & \cellcolor[HTML]{F2F3F4}\textbf{0.571} & \cellcolor[HTML]{F2F3F4}\textbf{0.572} & \cellcolor[HTML]{F2F3F4}0.572 & \cellcolor[HTML]{F2F3F4}\textbf{0.574} &
 \cellcolor[HTML]{F2F3F4}0.573 & \cellcolor[HTML]{F2F3F4}\textbf{0.574} &\cellcolor[HTML]{F2F3F4}\textbf{0.573} \\
 & \multirow{2}{*}{\textsc{LINE}} & 0.525 & 0.554 & 0.570 & 0.580 & 0.587 & 0.590 & 0.594 & 0.597 & 0.600 & 0.603 & 0.613 & 0.618 & 0.619 & 0.621 & 0.621 & 0.623 & 0.623 & 0.623 \\
 &  & 0.434 & 0.475 & 0.499 & 0.510 & 0.519 & 0.521 & 0.526 & 0.530 & 0.533 & 0.535 & 0.548 & 0.552 & 0.554 & 0.556 & 0.555 & 0.557 & 0.558 & 0.559 \\
 & \multirow{2}{*}{\textsc{HOPE}} & \cellcolor[HTML]{F2F3F4}0.376 & \cellcolor[HTML]{F2F3F4}0.379 & \cellcolor[HTML]{F2F3F4}0.379 & \cellcolor[HTML]{F2F3F4}0.378 & \cellcolor[HTML]{F2F3F4}0.378 & \cellcolor[HTML]{F2F3F4}0.379 & \cellcolor[HTML]{F2F3F4}\cellcolor[HTML]{F2F3F4}0.379 & \cellcolor[HTML]{F2F3F4}0.379 & \cellcolor[HTML]{F2F3F4}0.378 & \cellcolor[HTML]{F2F3F4}0.379 & \cellcolor[HTML]{F2F3F4}0.378 & \cellcolor[HTML]{F2F3F4}0.379 & \cellcolor[HTML]{F2F3F4}0.379 & \cellcolor[HTML]{F2F3F4}0.379 & \cellcolor[HTML]{F2F3F4}0.379 & \cellcolor[HTML]{F2F3F4}0.378 & \cellcolor[HTML]{F2F3F4}0.379 & \cellcolor[HTML]{F2F3F4}0.380 \\
 &  & \cellcolor[HTML]{F2F3F4}0.137 & \cellcolor[HTML]{F2F3F4}0.137 & \cellcolor[HTML]{F2F3F4}0.137 & \cellcolor[HTML]{F2F3F4}0.137 & \cellcolor[HTML]{F2F3F4}0.137 & \cellcolor[HTML]{F2F3F4}0.137 & \cellcolor[HTML]{F2F3F4}0.137 & \cellcolor[HTML]{F2F3F4}0.137 & \cellcolor[HTML]{F2F3F4}0.137 & \cellcolor[HTML]{F2F3F4}0.137 & \cellcolor[HTML]{F2F3F4}0.137 & \cellcolor[HTML]{F2F3F4}0.137 & \cellcolor[HTML]{F2F3F4}0.137 & \cellcolor[HTML]{F2F3F4}0.137 & \cellcolor[HTML]{F2F3F4}0.137 & \cellcolor[HTML]{F2F3F4}0.137 & \cellcolor[HTML]{F2F3F4}\cellcolor[HTML]{F2F3F4}0.138 & \cellcolor[HTML]{F2F3F4}0.138 \\
 & \multirow{2}{*}{\textsc{NetMF}} & 0.564 & 0.577 & 0.586 & 0.589 & 0.593 & 0.596 & 0.599 & 0.601 & 0.604 & 0.605 & 0.613 & 0.617 & 0.619 & 0.620 & 0.620 & 0.623 & 0.623 & 0.623 \\
 &  & 0.463 & 0.490 & 0.503 & 0.506 & 0.510 & 0.513 & 0.517 & 0.518 & 0.521 & 0.522 & 0.528 & 0.530 & 0.532 & 0.531 & 0.531 & 0.533 & 0.533 & 0.533
 \\\midrule
\multirow{6}{*}{\rotatebox{90}{\textsc{EFGE}}} & \multirow{2}{*}{\textsc{EFGE-Bern}} & 0.554 & 0.573 & 0.586 & 0.598 & 0.604 & 0.610 & 0.615 & 0.617 & 0.618 & 0.622 & 0.631 & 0.634 & 0.635 & 0.638 & 0.637 & 0.638 & 0.638 & 0.638 \\
 &  & 0.494 & 0.514 & 0.526 & 0.539 & 0.542 & 0.547 & 0.551 & 0.553 & 0.553 & 0.556 & 0.563 & 0.566 & 0.566 & 0.569 & 0.568 & 0.569 & 0.570 & 0.570 \\
 & \multirow{2}{*}{\textsc{EFGE-Pois}} & \cellcolor[HTML]{F2F3F4}0.574 & \cellcolor[HTML]{F2F3F4}0.588 & \cellcolor[HTML]{F2F3F4}0.598 & \cellcolor[HTML]{F2F3F4}0.605 & \cellcolor[HTML]{F2F3F4}0.611 & \cellcolor[HTML]{F2F3F4}0.614 & \cellcolor[HTML]{F2F3F4}0.618 &\cellcolor[HTML]{F2F3F4} 0.620 &\cellcolor[HTML]{F2F3F4} 0.623 & \cellcolor[HTML]{F2F3F4}0.624 & \cellcolor[HTML]{F2F3F4}0.631 & \cellcolor[HTML]{F2F3F4}0.635 & \cellcolor[HTML]{F2F3F4}0.636 & \cellcolor[HTML]{F2F3F4}0.637 & \cellcolor[HTML]{F2F3F4}0.638 & \cellcolor[HTML]{F2F3F4}0.636 & \cellcolor[HTML]{F2F3F4}0.638 & \cellcolor[HTML]{F2F3F4}0.638 \\
 &  &\cellcolor[HTML]{F2F3F4}0.509 &\cellcolor[HTML]{F2F3F4}0.528 & \cellcolor[HTML]{F2F3F4}0.538 &
 \cellcolor[HTML]{F2F3F4}0.544 &
 \cellcolor[HTML]{F2F3F4}0.549 & \cellcolor[HTML]{F2F3F4}0.552 & \cellcolor[HTML]{F2F3F4}0.554 &
 \cellcolor[HTML]{F2F3F4}0.556 & \cellcolor[HTML]{F2F3F4}0.559 & \cellcolor[HTML]{F2F3F4}0.559 &
 \cellcolor[HTML]{F2F3F4}0.565 &
 \cellcolor[HTML]{F2F3F4}0.568 & \cellcolor[HTML]{F2F3F4}0.569 & \cellcolor[HTML]{F2F3F4}0.569 &
 \cellcolor[HTML]{F2F3F4}0.570 &
 \cellcolor[HTML]{F2F3F4}0.569 & \cellcolor[HTML]{F2F3F4}0.570 & \cellcolor[HTML]{F2F3F4}0.570 \\
 & \multirow{2}{*}{\textsc{EFGE-Norm}} & \textbf{0.596} & \textbf{0.603} & \textbf{0.610} & \textbf{0.614} & \textbf{0.618} & \textbf{0.622} & \textbf{0.622} & \textbf{0.624} & \textbf{0.627} & \textbf{0.628} & \textbf{0.635} & \textbf{0.637} & \textbf{0.640} & \textbf{0.640} & \textbf{0.641} & \textbf{0.642} & \textbf{0.642} & \textbf{0.641} \\
 &  & \textbf{0.520} & \textbf{0.533} & \textbf{0.544} & \textbf{0.548} & \textbf{0.552} & \textbf{0.556} & \textbf{0.555} & \textbf{0.558} & \textbf{0.560} & \textbf{0.562} & 0.568 & 0.569 & \textbf{0.572} & \textbf{0.573} & 0.572 & \textbf{0.574} & 0.573 & 0.572\\\bottomrule
\end{tabular}%
}
\caption{Node classification for varying training sizes for the \textsl{DBLP} network. For each method, the first row indicates the Micro-$F_1$ scores and the second one shows the Macro-$F_1$ scores.}
\label{tab:classification_dblp_appendix}
\end{table*}

\end{document}